\newcommand{\Prox}{{\mathrm{Prox}}}
\newcommand{\argmin}{\mathop{\mathrm{argmin}}}  
\newcommand{\reals}{\mathbb{R}}
\newtheorem{theorem}{Theorem}
\newtheorem{lemma}[theorem]{Lemma}
\newtheorem{corollary}[theorem]{Corollary}
\icmltitlerunning{Plug-and-Play Methods Provably Converge with Properly Trained Denoisers}
\begin{document}

\twocolumn[
\icmltitle{Plug-and-Play Methods Provably Converge with Properly Trained Denoisers}

% It is OKAY to include author information, even for blind
% submissions: the style file will automatically remove it for you
% unless you've provided the [accepted] option to the icml2019
% package.

% List of affiliations: The first argument should be a (short)
% identifier you will use later to specify author affiliations
% Academic affiliations should list Department, University, City, Region, Country
% Industry affiliations should list Company, City, Region, Country

% You can specify symbols, otherwise they are numbered in order.
% Ideally, you should not use this facility. Affiliations will be numbered
% in order of appearance and this is the preferred way.
%\icmlsetsymbol{equal}{*}

\begin{icmlauthorlist}
\icmlauthor{Ernest K. Ryu}{to}
\icmlauthor{Jialin Liu}{to}
\icmlauthor{Sicheng Wang}{goo}
\icmlauthor{Xiaohan Chen}{goo}
\icmlauthor{Zhangyang Wang}{goo}
\icmlauthor{Wotao Yin}{to}
\end{icmlauthorlist}

\icmlaffiliation{to}{Department of Mathematics, University of California, Los Angeles, USA}
\icmlaffiliation{goo}{Department of Computer Science and Engineering, Texas A\&M University, USA}

\icmlcorrespondingauthor{Wotao Yin}{wotaoyin\@math.ucla.edu}

% You may provide any keywords that you
% find helpful for describing your paper; these are used to populate
% the "keywords" metadata in the PDF but will not be shown in the document
\icmlkeywords{Plug and play, spectral normalization, Douglas-Rachford spiltting, ADMM}

\vskip 0.3in
]

% this must go after the closing bracket ] following \twocolumn[ ...

% This command actually creates the footnote in the first column
% listing the affiliations and the copyright notice.
% The command takes one argument, which is text to display at the start of the footnote.
% The \icmlEqualContribution command is standard text for equal contribution.
% Remove it (just {}) if you do not need this facility.

\printAffiliationsAndNotice{}  % leave blank if no need to mention equal contribution
% \printAffiliationsAndNotice{\icmlEqualContribution} % otherwise use the standard text.

\begin{abstract}
Plug-and-play (PnP) is a non-convex framework that integrates modern denoising priors, such as BM3D or deep learning-based denoisers, into ADMM or other proximal algorithms. An advantage of PnP is that one  can use pre-trained denoisers when there is not sufficient data for end-to-end training. Although PnP has been recently studied extensively with great empirical success, theoretical analysis addressing even the most basic question of convergence has been insufficient. In this paper, we theoretically establish convergence of PnP-FBS and PnP-ADMM, without using diminishing stepsizes, under a certain Lipschitz condition on the denoisers. We then propose real spectral normalization, a technique for training deep learning-based denoisers to satisfy the proposed Lipschitz condition. Finally, we present experimental results validating the theory.
\end{abstract}

\section{Introduction}
\label{sec:intro}
Many modern image processing algorithms
recover or denoise an image through the optimization problem
\[
\begin{array}{ll}\label{first}
\underset{x\in \reals^d}{
\mbox{minimize}}&
 f(x)+\gamma g(x),
\end{array}
\]
where the optimization variable $x\in \reals^d$ represents the image,
$f(x)$ measures data fidelity,
$g(x)$ measures noisiness or complexity of the image,
and $\gamma\ge 0$ is a parameter representing the relative importance between $f$ and $g$.
Total variation denoising, inpainting, and compressed sensing fall under this setup.
\emph{A priori} knowledge of the image, such as that the image should have small noise, is encoded in $g(x)$.
So $g(x)$ is small if $x$ has small noise or complexity.
\emph{A posteriori} knowledge of the image, such as noisy or partial measurements of the image, is encoded in $f(x)$.
So $f(x)$ is small if $x$ agrees with the measurements.

First-order iterative methods are often used to solve such optimization problems,
and ADMM is one such method:
\begin{align*}
x^{k+1}&=
\argmin_{x\in \mathbb{R}^d}\left\{
\sigma^2 g(x)+(1/2)\|x-(y^{k}-u^{k})\|^2\right\}\nonumber\\
y^{k+1}&=\argmin_{y\in \mathbb{R}^d}\left\{
\alpha f(y)+(1/2)\|y-(x^{k+1}+u^k)\|^2
\right\}
%\tag{ADMM}
\\
u^{k+1}&=u^k+x^{k+1}-y^{k+1}\nonumber
\end{align*}
with $\sigma^2=\alpha\gamma $.
Given a function $h$ on $\reals^d$ and $\alpha>0$, define the proximal operator of $h$ as
\[
\Prox_{\alpha h}(z)=\argmin_{x\in \reals^d}\left\{
\alpha h(x)+(1/2)\|x-z\|^2
\right\},
\]
which is well-defined if $h$ is proper, closed, and convex.
Now we can equivalently write ADMM as
\begin{align*}
x^{k+1}&=
\Prox_{\sigma^2 g}(y^k-u^k)\nonumber\\
y^{k+1}&=
\Prox_{\alpha f}(x^{k+1}+u^k)
%\tag{ADMM}
\\
u^{k+1}&=u^k+x^{k+1}-y^{k+1}.\nonumber
\end{align*}
We can interpret the subroutine $\Prox_{\sigma^2 g}:\reals^d\rightarrow\reals^d$ as a denoiser, i.e.,
\[
\Prox_{\sigma^2 g}:\text{noisy image}\mapsto \text{less noisy image}
\]
(For example, if $\sigma$ is the noise level and $g(x)$ is the total variation (TV) norm, then $\Prox_{\sigma^2 g}$ is the standard Rudin--Osher--Fatemi (ROF) model \cite{rudin1992nonlinear}.)
We can think of $\Prox_{\alpha f}:\reals^d\rightarrow\reals^d$ 
as a mapping enforcing  consistency with measured data, i.e.,
\[
\Prox_{\alpha f}:\text{less consistent}\mapsto \text{more consistent with data}
\]
More precisely speaking, 
for any $x\in \reals^d$ we have
\[
g(\Prox_{\sigma^2 g}(x))\le g(x),
\qquad
f(\Prox_{\alpha f}(x))\le f(x).
\]

However, some state-of-the-art image denoisers with great empirical performance
do not originate from optimization problems.
Such examples include non-local means (NLM)
\cite{buades2005},
Block-matching and 3D filtering
(BM3D) \cite{dabov2007}, and convolutional neural networks (CNN) \cite{zhang2017beyond}. 
Nevertheless, such a denoiser $H_\sigma:\reals^d\rightarrow\reals^d$ still has the interpretation
\[
H_\sigma:\text{noisy image}\mapsto \text{less noisy image}
\]
where $\sigma\ge 0$ is a noise parameter.
Larger values of $\sigma$ correspond to more aggressive denoising.

Is it possible to use such denoisers for a broader range of imaging
problems,
even though we cannot directly set up an optimization problem?
To address this question, \cite{venkatkrishnan2013}
proposed Plug-and-Play ADMM (PnP-ADMM), which simply replaces the proximal operator $\Prox_{\sigma^2 g}$ with the denoiser $H_\sigma$:
\begin{align*}
x^{k+1}&=H_\sigma(y^{k}-u^{k})\nonumber\\
%y^{k+1}&=\argmin_{y\in \mathbb{R}^d}\left\{f(y)+\frac{1}{2\alpha}\|y-x^{k+1}-u^k\|^2\right\}\tag{PNP-ADMM}\\
y^{k+1}&=\Prox_{\alpha f}(x^{k+1}+u^k)\\
u^{k+1}&=u^k+x^{k+1}-y^{k+1}.\nonumber
\end{align*}
Surprisingly and remarkably, this ad-hoc method exhibited great empirical success, and spurred much follow-up work.

\vspace{-1em}
\paragraph{Contribution of this paper.}
The empirical success of Plug-and-Play (PnP) naturally leads us to ask theoretical questions:
When does PnP converge and what denoisers can we use?
Past theoretical analysis has been insufficient. 
%When does PnP converge to?

The main contribution of this work is the convergence analyses of PnP methods. We study two Plug-and-play methods, Plug-and-play forward-backward splitting (PNP-FBS) and PNP-ADMM. 
For the analysis, we assume the denoiser $H_\sigma$ satisfies a certain Lipschitz condition, formally defined as Assumption~\eqref{assumption:H}.
%we assume $f$ is strongly convex and
Roughly speaking, the condition corresponds to the denoiser $H_\sigma$ being close to the identity map, which is reasonable when the denoising parameter $\sigma$ is small.
In particular, we do not assume that $H_\sigma$ is nonexpansive or differentiable since most denoisers do not have such properties.
Under the assumption, we show that the PnP methods are contractive iterations.

We then propose real spectral normalization (realSN), a technique based on \cite{miyato2018spectral} for more accurately constraining deep learning-based denoisers in their training to satisfy the proposed Lipschitz condition.
%in Section~\ref{s:denoiser}.
Finally, we present experimental results validating our theory. %Sections \ref{s:poisson_experiments} and \ref{s:more_apple}.
Code used for experiments is available at:
\url{https://github.com/uclaopt/Provable_Plug_and_Play/}

\subsection{Prior work}
\paragraph{Plug-and-play: Practice.}
The first PnP method was the Plug-and-play ADMM proposed in \cite{venkatkrishnan2013}. Since then, other schemes such as
the primal-dual method \cite{heide2014flexisp,Meinhardt2017,ono_2017},
ADMM with increasing penalty parameter \cite{brifman2016,chan2017},
generalized approximate message passing \cite{metzler2016},
Newton iteration \cite{buzzard2017},
Fast Iterative Shrinkage-Thresholding Algorithm \cite{Kamilov2017,sun_pnp_2018},
(stochastic) forward-backward splitting \cite{sun_image_pnp_2018,sun_pnp_sgd_2018,sun_pnp_2018}, and alternating minimization \cite{DongWangYinShi2018_denoising}
have been combined with the PnP technique.

PnP method reported empirical success
on a large variety of imaging applications:
bright field electron tomography \cite{sreehari2016},
camera image processing \cite{heide2014flexisp},
compression-artifact reduction \cite{dar2016},
compressive imaging \cite{teodoro2016},
deblurring \cite{teodoro2016,rond2016,wang2017},
electron microscopy \cite{sreehari2017},
Gaussian denoising \cite{buzzard2017,DongWangYinShi2018_denoising},
nonlinear inverse scattering \cite{Kamilov2017},
Poisson denoising \cite{rond2016},
single-photon imaging \cite{chan2017},
super-resolution \cite{brifman2016,sreehari2016,chan2017},
diffraction tomography \cite{sun_image_pnp_2018},
Fourier ptychographic microscopy \cite{sun_pnp_2018},
low-dose CT imaging \cite{venkatkrishnan2013,he2018,ye2018,lyu_pnp_2019},
hyperspectral sharpening \cite{teodoro2017,teodoro_2019},
inpainting \cite{chan_pnp_2019,tirer2019image}, and  superresolution \cite{DongWangYinShi2018_denoising}.

%\cite{venkatkrishnan2013,heide2014flexisp,brifman2016,dar2016,rond2016,sreehari2016,teodoro2016,buzzard2017, chan2017,Kamilov2017,sreehari2017,wang2017}.

A wide range of denoisers have been used for the PnP framework.
BM3D has been used the most \cite{heide2014flexisp,dar2016,rond2016,sreehari2016,chan2017,Kamilov2017,ono_2017,wang2017}, but other denoisers such as sparse representation \cite{brifman2016},
non-local means \cite{venkatkrishnan2013,heide2014flexisp,sreehari2016,sreehari2017,chan_pnp_2019},
Gaussian mixture model \cite{teodoro2016,teodoro2017,shi2018,teodoro_2019},
Patch-based Wiener filtering \cite{venkatkrishnan2013},
nuclear norm minimization
\cite{Kamilov2017},
deep learning-based denoisers \cite{Meinhardt2017,he2018,ye2018,tirer2019image}
and deep projection model based on generative adversarial networks \cite{chang2017one}  have also been considered.

\vspace{-1em}
\paragraph{Plug-and-play: Theory.}
Compared to the empirical success, much less progress was made on the theoretical aspects of PnP optimization.
\cite{chan2017} analyzed convergence with a bounded denoiser assumption, establishing convergence using an increasing penalty parameter.
\cite{buzzard2017} provided an interpretation of fixed points via ``consensus equilibrium''.
\cite{sreehari2016,sun_image_pnp_2018,teodoro2017,chan_pnp_2019,teodoro_2019} proved convergence of PNP-ADMM and PNP-FBS with the assumption that the denoiser is (averaged) nonexpansive by viewing the methods to be fixed-point iterations. The nonexpansiveness assumption is not met with most denoisers as is, but \cite{chan_pnp_2019} proposed modifications to the non-local means and Gaussian mixture model denoisers, which make them into linear filters, to enforce nonexpansiveness.   \cite{DongWangYinShi2018_denoising} presented a proof that relies on the existence of a certain Lyapunov function that is monotonic under $H_\sigma$, which holds only for simple $H_\sigma$.
\cite{tirer2019image} analyzed a variant of PnP, but did not establish local convergence since their key assumption is only expected to be satisfied ``in early iterations''.

\vspace{-1em}
\paragraph{Other PnP-type methods.}
There are other lines of works that incorporate modern denoisers into model-based optimization methods.
The plug-in idea with half quadratic splitting, as opposed to ADMM, was discussed \cite{zoran_2011} and this approach was carried out with deep learning-based denoisers in \cite{zhang_ircnn_2017}.
\cite{Danielyan2012,Egiazarian2015} use the notion of Nash equilibrium to propose a scheme similar to PnP.
\cite{Danielyan2010} proposed an augmented Lagrangian method similar to PnP.
\cite{romano2017,Reehorst2019} presented Regularization by Denoising (RED), which uses the (nonconvex) regularizer $x^T(x-H_\sigma(x))$ given a denoiser $H_\sigma$, and use denoiser evaluations in its iterations. 
\cite{fletcher2018} applies the plug-in approach to vector approximate message passing.
\cite{sun2016deep,fan2017inversenet} replaced both the proximal operator enforcing data fidelity and the denoiser with two neural networks and performed end-to-end training. Broadly, there are more works that incorporate model-based optimization with deep learning \cite{chen2018theoretical,liu2018alista}.

\vspace{-1em}
\paragraph{Image denoising using deep learning.}
Deep learning-based denoising methods have become state-of-the-art. \cite{zhang2017beyond} proposed an effective denoising network called DnCNN, which adopted batch normalization \cite{ioffe2015batch} and ReLU \cite{krizhevsky2012imagenet} into the residual learning \cite{he2016deep}. Other represenative deep denoising models include the deep convolutional encoder-decoder with symmetric skip connection \cite{mao2016image}, $N^3$Net \cite{plotz2018neural}, and MWCNN \cite{liu2018multi}. The recent FFDNet \cite{zhang2018ffdnet} handles spatially varying Gaussian noise.

\vspace{-1em}
\paragraph{Regularizing Lipschitz continuity.}
Lipschitz continuity and its variants have started to receive attention as a means for regularizing deep classifiers \cite{bartlett2017spectrally,bansal2018can,oberman2018lipschitz} and GANs \cite{miyato2018spectral,brock2018large}. Regularizing Lipschitz continuity stabilizes training, improves the final performance, and  enhances robustness to adversarial attacks \cite{weng2018evaluating,qian2018lnonexpansive}. Specifically, \cite{miyato2018spectral} proposed to normalize all weights to be of unit spectral norms to thereby constrain the Lipschitz constant of the overall network to be no more than one.

\section{PNP-FBS/ADMM and their fixed points}
We now present the PnP methods we investigate in this work.
We quickly note that although PNP-FBS and PNP-ADMM are distinct methods, 
they share the same fixed points by Remark 3.1 of \cite{Meinhardt2017} and Proposition 3 of \cite{sun_image_pnp_2018}.

%\paragraph{Plug-and-play forward-backward splitting}
We call the method 
\begin{align}
%x^{k+1}&= H_\sigma (I-\alpha \nabla f)(x^{k+1/2})
x^{k+1}&= H_\sigma (I-\alpha \nabla f)(x^{k})
\tag{PNP-FBS}
\end{align}
for any $\alpha>0$, plug-and-play forward-backward splitting (PNP-FBS) or plug-and-play proximal gradient method.
%The name comes from calling gradient step the ``forward step'' and the proximal operator the ``backward step'' in analogy to the forward and backward Euler steps, respectively.
% Forward-backward splitting is also known as the proximal gradient method, when $H_\sigma$ originates from a proximal operator.
%Forward-backward splitting without plug-and-play was first presented in \cite{passty1979}.

We interpret PNP-FBS as a fixed-point iteration,
and we say $x^\star$ is a fixed point of PNP-FBS if
\begin{align*}
x^{\star}&=H_\sigma(I-\alpha\nabla f)(x^\star).
\end{align*}
Fixed points of PNP-FBS have a simple, albeit non-rigorous, interpretation.
An image denoising algorithm must trade off the two goals of
making the image agree with measurements and
making the image less noisy.
PNP-FBS applies
$I-\alpha \nabla f$ and $H_\sigma$, each promoting such objectives, repeatedly in an alternating fashion.
If PNP-FBS converges to a fixed point, we can expect the limit to represent a compromise.

%\paragraph{Plug-and-play alternating directions method of multipliers}
We call the method
\begin{align}
x^{k+1}&=H_\sigma(y^{k}-u^{k})\nonumber\\
%y^{k+1}&=\argmin_{y\in \mathbb{R}^d}\left\{f(y)+\frac{1}{2\alpha}\|y-x^{k+1}-u^k\|^2\right\}\tag{PNP-ADMM}\\
y^{k+1}&=\Prox_{\alpha f}(x^{k+1}+u^k)
\tag{PNP-ADMM}\\
u^{k+1}&=u^k+x^{k+1}-y^{k+1}\nonumber
\end{align}
for any $\alpha>0$,
plug-and-play alternating directions method of multipliers (PNP-ADMM).
%ADMM without plug-and-play was first presented in \cite{gabay1976,glowinski1975}.
We interpret PNP-ADMM as a fixed-point iteration,
and we say $(x^\star,u^\star)$ is a fixed point of PNP-ADMM if
\begin{align*}
x^{\star}&=H_\sigma(x^{\star}-u^{\star})\\
%x^{\star}&=\argmin_{y\in \mathbb{R}^d}\left\{ f(y)+\frac{1}{2\alpha}\|y-x^{\star}-u^\star\|^2\right\}.
x^{\star}&=\Prox_{\alpha f}(x^{\star}+u^\star).
\end{align*}
If we let $y^k=x^\star$ and $u^k=u^\star$ in (PNP-ADMM), then we get $x^{k+1}=y^{k+1}=x^\star$ and $u^{k+1}=u^k=u^\star$.
We call the method
\begin{align}
x^{k+1/2}&=\Prox_{\alpha f}(z^k)\nonumber\\
x^{k+1}&=H_\sigma(2x^{k+1/2}-z^k)
\tag{PNP-DRS}
\\
z^{k+1}&=z^k+x^{k+1}-x^{k+1/2}\nonumber
\end{align}
plug-and-play Douglas--Rachford splitting (PNP-DRS).
We interpret PNP-DRS as a fixed-point iteration,
and we say $z^\star$ is a fixed point of PNP-DRS if
\begin{align*}
x^\star&=\Prox_{\alpha f}(z^\star)\\
x^\star&=H_\sigma(2x^\star-z^\star).
\end{align*}
%It is well known in the field of convex optimization that ADMM and DRS are equivalent. 
PNP-ADMM and PNP-DRS are equivalent.
Although this is not surprising as the equivalence between convex ADMM and DRS is well known, we show the steps establishing equivalence in the supplementary document.

We introduce PNP-DRS as an analytical tool for analyzing PNP-ADMM.
It is straightforward to verify that PNP-DRS can be written as $z^{k+1}=T(z^k)$, where
\[
T=\frac{1}{2}I+\frac{1}{2}(2H_\sigma-I)(2\Prox_{\alpha f}-I).
\]
%In the analysis of Theorem~\ref{thm:contraction},
We use this form to analyze the  convergence of PNP-DRS and translate the result to PNP-ADMM.

\section{Convergence via contraction}
\label{s:ct}
%\subsection{Assumption on $H_\sigma$ and its consequences}
%\label{ss:hassump}
We now present conditions that ensure the PnP methods are contractive and thereby convergent.
%and thereby prove that the methods converge.

If we assume $2H_\sigma-I$ is nonexpansive,
standard tools of monotone operator theory
%\cite{krasnoselskii1955,mann1953}
tell us that PnP-ADMM converges.
However, this assumption is too strong.
Chan et al.\ presented a counter example demonstrating that $2H_\sigma-I$ is not nonexpansive for the NLM denoiser \cite{chan2017}.

Rather, we assume $H_\sigma:\reals^d\rightarrow\reals^d$ satisfies
\begin{equation}
\|(H_\sigma-I)(x)-(H_\sigma-I)(y)\|^2\le \varepsilon^2\|x-y\|^2
\tag{A}
\label{assumption:H}
\end{equation}
for all $x,y\in \reals^d$ for some $\varepsilon\ge 0$.
Since $\sigma$ controls the strength of the denoising,
we can expect $H_\sigma$ to be close to identity for small $\sigma$.
If so , Assumption~\eqref{assumption:H} is reasonable.

Under this assumption, we show that the PNP-FBS and PNP-DRS iterations are \textbf{contractive}
in the sense that we can express the iterations as $x^{k+1}=T(x^k)$, where $T:\reals^d\rightarrow\reals^d$ satisfies 
\[
\|T(x)-T(y)\|\le \delta \|x-y\|
\]
for all $x,y\in \reals^d$ for some $\delta<1$.
We call $\delta$ the contraction factor.
If $x^\star$ satisfies $T(x^\star)=x^\star$, i.e., $x^\star$ is a fixed point, then $x^k\rightarrow x^\star$ geometrically
by the classical Banach contraction principle.%\cite{banach1922}.

%\subsection{Convergence results}
%\label{ss:PPfbs-conv}
\begin{theorem}[Convergence of PNP-FBS]
\label{thm:fbs-contraction}
Assume $H_\sigma$ satisfies assumption \eqref{assumption:H} for some $\varepsilon\ge 0$.
Assume $f$ is $\mu$-strongly convex, $f$ is differentiable, and $\nabla f$ is $L$-Lipschitz.
Then
\[
T = H_\sigma (I-\alpha \nabla f)
\]
satisfies
\[
\|T(x)-T(y)\|\le 
\max\{|1-\alpha\mu|,|1-\alpha L|\}(1+\varepsilon)
\|x-y\|
\]
for all $x,y\in \reals^d$.
The coefficient is less than $1$ if
\[
\frac{1}{\mu(1+1/\varepsilon)}<\alpha<
\frac{2}{L}-\frac{1}{L(1+1/\varepsilon)}.
\]
Such an $\alpha $ exists if $\varepsilon<2\mu/(L-\mu)$.
\end{theorem}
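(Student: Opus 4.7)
The plan is to decompose $T(x)-T(y)$ so that Assumption~\eqref{assumption:H} and the standard contraction bound for $I-\alpha\nabla f$ can be applied separately, then solve the resulting scalar inequalities for the admissible range of $\alpha$.

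First I would set $u=(I-\alpha\nabla f)(x)$ and $v=(I-\alpha\nabla f)(y)$, write $H_\sigma=(H_\sigma-I)+I$, and use the triangle inequality:
\[
\|T(x)-T(y)\|=\|H_\sigma(u)-H_\sigma(v)\|\le\|(H_\sigma-I)(u)-(H_\sigma-I)(v)\|+\|u-v\|.
\]
By Assumption~\eqref{assumption:H} the first term is at most $\varepsilon\|u-v\|$, giving the clean factorization $\|T(x)-T(y)\|\le(1+\varepsilon)\|u-v\|$. This isolates the dependence on the denoiser from the dependence on $f$.

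Next I would invoke the standard fact that, for $f$ that is $\mu$-strongly convex with $L$-Lipschitz gradient, the gradient-step operator satisfies
\[
\|(I-\alpha\nabla f)(x)-(I-\alpha\nabla f)(y)\|\le\max\{|1-\alpha\mu|,|1-\alpha L|\}\,\|x-y\|.
\]
(This is an elementary consequence of the co-coercivity identity for $\nabla f$, or equivalently of the spectral bound when $\nabla^2 f$ exists.) Combining with the previous display yields the claimed contraction factor $\max\{|1-\alpha\mu|,|1-\alpha L|\}(1+\varepsilon)$.

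For the range of $\alpha$, I would require each of $|1-\alpha\mu|$ and $|1-\alpha L|$ to be strictly less than $1/(1+\varepsilon)$. The lower bound $1-\alpha\mu>-1/(1+\varepsilon)$ is automatic for $\alpha>0$ when $\mu\le L$, while $1-\alpha\mu<1/(1+\varepsilon)$ rearranges to $\alpha>1/(\mu(1+1/\varepsilon))$; similarly $|1-\alpha L|<1/(1+\varepsilon)$ rearranges (using $1+1/(1+\varepsilon)=2-\varepsilon/(1+\varepsilon)$) to $\alpha<2/L-1/(L(1+1/\varepsilon))$. Finally, the interval is nonempty exactly when $\varepsilon/((1+\varepsilon)\mu)<(2+\varepsilon)/((1+\varepsilon)L)$, i.e.\ $\varepsilon(L-\mu)<2\mu$, which is the stated condition $\varepsilon<2\mu/(L-\mu)$.

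I do not anticipate a real obstacle here: the denoiser and the gradient step decouple cleanly after the $(H_\sigma-I)+I$ split, and everything else is elementary scalar algebra. The only point worth care is verifying that the one-sided bound $1-\alpha\mu<1/(1+\varepsilon)$ is the binding constraint on the low end (so that the lower endpoint is $1/(\mu(1+1/\varepsilon))$ rather than $0$), and symmetrically that $\alpha L-1<1/(1+\varepsilon)$ is binding on the high end.
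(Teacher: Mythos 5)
Your proof is correct and takes essentially the same route as the paper: it composes the $(1+\varepsilon)$-Lipschitz bound on $H_\sigma$ (which the paper gets from its averagedness lemma, Lemma~\ref{lem:Hlemma}, and which you obtain even more directly from Assumption~\eqref{assumption:H} via the triangle inequality) with the $\max\{|1-\alpha\mu|,|1-\alpha L|\}$ contraction factor of $I-\alpha\nabla f$, the same standard fact the paper invokes as Lemma~\ref{lem:grad-cont}. Your explicit stepsize algebra, including identifying the binding constraints and the nonemptiness condition $\varepsilon<2\mu/(L-\mu)$, correctly fills in what the paper dismisses as ``basic algebra.''
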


\newpage

\begin{theorem}[Convergence of PNP-DRS]
\label{thm:contraction}
Assume $H_\sigma$ satisfies assumption \eqref{assumption:H} for some $\varepsilon\ge 0$.
Assume $f$ is $\mu$-strongly convex and differentiable.
Then
\[
T=
\frac{1}{2}I+\frac{1}{2}
(2H_\sigma-I) (2\Prox_{\alpha f}-I)
\]
satisfies
\[
\|T(x)-T(y)\| \le
\frac{1+\varepsilon+\varepsilon\alpha\mu+2 \varepsilon^2\alpha\mu}{1+\alpha\mu+2\varepsilon\alpha\mu}
\|x-y\|
\]
for all $x,y\in \reals^d$.
The coefficient is less than $1$ if
\[
 \frac{\varepsilon}{(1+\varepsilon-2\varepsilon^2)\mu}< \alpha,\quad
\varepsilon<1.
\]
\end{theorem}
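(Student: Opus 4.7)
The plan is to first collapse $T$ into a more manageable form, then bound $\|T(z_1) - T(z_2)\|$ by combining strong convexity of $f$ with Assumption~\eqref{assumption:H}, and finally identify the parameter range that makes the contraction factor less than one.

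The first step is to exploit the identity $\tfrac{1}{2}(I + R) = \Prox_{\alpha f}$, where $R := 2\Prox_{\alpha f} - I$, to rewrite $T = \Prox_{\alpha f} + (H_\sigma - I) \circ R$. Introducing $x_i := \Prox_{\alpha f}(z_i)$ and $w_i := 2x_i - z_i$, this yields the key identity
\[
T(z_1) - T(z_2) = (x_1 - x_2) + \delta,
\]
where $\delta := (H_\sigma - I)(w_1) - (H_\sigma - I)(w_2)$ satisfies $\|\delta\| \le \varepsilon \|w_1 - w_2\|$ by Assumption~\eqref{assumption:H}. Bounding $\|T(z_1) - T(z_2)\|$ thus reduces to controlling $\|x_1 - x_2\|$ and $\|w_1 - w_2\|$ in terms of $\|z_1 - z_2\|$.

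Next, I would extract two consequences of $\mu$-strong convexity of (differentiable) $f$. Since $z_i = x_i + \alpha\nabla f(x_i)$ and $\nabla f$ is $\mu$-strongly monotone,
\[
\langle z_1 - z_2,\, x_1 - x_2\rangle \ge (1 + \alpha\mu)\,\|x_1 - x_2\|^2,
\]
so Cauchy--Schwarz gives the Prox contraction $\|x_1 - x_2\| \le \|z_1 - z_2\|/(1+\alpha\mu)$, and expanding $\|w_1 - w_2\|^2 = \|2(x_1 - x_2) - (z_1 - z_2)\|^2$ together with the same inner-product inequality yields the reflection estimate $\|w_1 - w_2\|^2 \le \|z_1 - z_2\|^2 - 4\alpha\mu\|x_1 - x_2\|^2$.

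Finally, I would bound $\|T(z_1) - T(z_2)\|^2$ by applying the triangle inequality and then Young's inequality $\|a + b\|^2 \le (1+\lambda)\|a\|^2 + (1 + 1/\lambda)\|b\|^2$ with a carefully chosen $\lambda > 0$, substitute the two estimates above, and simplify until the squared coefficient $\bigl[(1+\varepsilon+\varepsilon\alpha\mu + 2\varepsilon^2\alpha\mu)/(1+\alpha\mu + 2\varepsilon\alpha\mu)\bigr]^2$ emerges. Comparing this coefficient with $1$ reduces algebraically to $\varepsilon < \alpha\mu(1 + \varepsilon - 2\varepsilon^2)$, and the factorization $1 + \varepsilon - 2\varepsilon^2 = (1-\varepsilon)(1 + 2\varepsilon)$ yields both the condition $\varepsilon < 1$ and the lower bound on $\alpha$. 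The main obstacle is this Young's step: a naive triangle inequality together with $\|w_1 - w_2\| \le \|z_1 - z_2\|$ only gives the weaker bound $\varepsilon + 1/(1+\alpha\mu)$, so recovering the theorem's exact coefficient requires genuinely using the sharper reflection estimate $\|w_1 - w_2\|^2 \le \|z_1 - z_2\|^2 - 4\alpha\mu\|x_1 - x_2\|^2$ and balancing $\lambda$ against the Prox contraction so that terms collapse precisely to the stated rational form.
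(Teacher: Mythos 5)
Your decomposition $T=\Prox_{\alpha f}+(H_\sigma-I)\circ(2\Prox_{\alpha f}-I)$ and the two consequences you draw from strong convexity (the prox contraction and the reflection estimate $\|w_1-w_2\|^2\le\|z_1-z_2\|^2-4\alpha\mu\|x_1-x_2\|^2$) are correct, and the program built on them does prove the theorem, but by a genuinely different route than the paper. The paper never works with these resolvent inequalities directly: it translates Assumption~\eqref{assumption:H} into the statement that $\frac{1}{1+2\varepsilon}(2H_\sigma-I)$ is $\frac{2\varepsilon}{1+2\varepsilon}$-averaged (Lemma~\ref{lem:Hlemma2}), invokes Giselsson's result that $-(2\Prox_{\alpha f}-I)$ is $\frac{1}{1+\alpha\mu}$-averaged (Lemma~\ref{lem:giselsson}), composes the two via the Ogura--Yamada averagedness formula (Lemma~\ref{lem:avg-comp}), writes $T=AI-BR$ with $R$ nonexpansive, and finishes with the same Young-type balancing, obtaining the coefficient $A+B$. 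Your route avoids the averagedness machinery entirely and is more elementary; it also turns out to be sharper.

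That sharpness is the one inaccuracy in your plan: the theorem's coefficient does not ``emerge'' from your computation. Writing $s=\alpha\mu$, optimizing $\lambda$ in $(1+\lambda)\|x_1-x_2\|^2+(1+1/\lambda)\varepsilon^2\left(\|z_1-z_2\|^2-4s\|x_1-x_2\|^2\right)$ together with $\|x_1-x_2\|\le\|z_1-z_2\|/(1+s)$ gives a piecewise coefficient, namely $\left(\tfrac{1+\varepsilon|1-s|}{1+s}\right)^2$ when $|1-s|\ge 4s\varepsilon$ and $\varepsilon^2+\tfrac{1}{4s}$ otherwise, which is in general strictly smaller than the square of the stated coefficient; for instance at $s=1$, $\varepsilon=1/2$ your bound is $\sqrt{1/2}\approx 0.71$ while the theorem states $5/6\approx 0.83$. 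A strictly better Lipschitz bound of course still implies the theorem, but only once you add the verification, for all parameter values, that your coefficient is at most $\frac{1+\varepsilon+\varepsilon\alpha\mu+2\varepsilon^2\alpha\mu}{1+\alpha\mu+2\varepsilon\alpha\mu}=\varepsilon+\frac{1}{1+\alpha\mu(1+2\varepsilon)}$; this is elementary (cross-multiplication in the first regime, and in the second regime checking positivity of a quadratic in $s$ at the endpoints $s=\tfrac{1}{1\pm 4\varepsilon}$) but it is not the automatic collapse you anticipate, and no natural choice of $\lambda$ reproduces the stated rational form exactly. Your final step, that the coefficient is below one iff $\varepsilon<\alpha\mu(1+\varepsilon-2\varepsilon^2)$ with the factorization $1+\varepsilon-2\varepsilon^2=(1-\varepsilon)(1+2\varepsilon)$, matches the paper.
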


\begin{corollary}[Convergence of PNP-ADMM]
\label{cor:contraction}
Assume $H_\sigma$ satisfies assumption \eqref{assumption:H} for some $\varepsilon\in [0,1)$.
Assume $f$ is $\mu$-strongly convex.
Then PNP-ADMM converges for 
\vspace{-0.5em}
\[
 \frac{\varepsilon}{(1+\varepsilon-2\varepsilon^2)\mu}< \alpha.
\]
\end{corollary}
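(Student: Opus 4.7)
The plan is to reduce the corollary to Theorem~\ref{thm:contraction} via the equivalence between PNP-ADMM and PNP-DRS that the paper has already flagged (and promises to spell out in the supplementary document). Concretely, I would first fix the change of variables that identifies a run of PNP-ADMM with a run of PNP-DRS: given $(y^k,u^k)$, set $z^k = y^k - u^k$, and conversely given $z^k$ recover $(x^{k+1/2},x^{k+1},z^{k+1})$ from the PNP-DRS update and then read off $(x^{k+1},y^{k+1},u^{k+1})$ of PNP-ADMM. Under this correspondence, the PNP-ADMM sequence is driven by the single fixed-point iteration $z^{k+1} = T(z^k)$ with $T = \tfrac{1}{2}I + \tfrac{1}{2}(2H_\sigma - I)(2\Prox_{\alpha f} - I)$.

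Second, I would apply Theorem~\ref{thm:contraction} directly. The hypotheses of the corollary ($H_\sigma$ satisfies \eqref{assumption:H} with $\varepsilon\in[0,1)$, $f$ is $\mu$-strongly convex, and $\alpha > \varepsilon/((1+\varepsilon-2\varepsilon^2)\mu)$) are exactly what the theorem needs to conclude that $T$ is a strict contraction on $\reals^d$. The classical Banach contraction principle then furnishes a unique fixed point $z^\star$ and geometric convergence $z^k \to z^\star$ from any initialization.

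Third, I would translate geometric convergence of $z^k$ back into convergence of the PNP-ADMM variables. Because $\Prox_{\alpha f}$ is (firmly) nonexpansive and $H_\sigma$ is Lipschitz with constant at most $1+\varepsilon$ by \eqref{assumption:H}, the reconstruction maps $z \mapsto x^{k+1/2} = \Prox_{\alpha f}(z)$, $z \mapsto x^{k+1} = H_\sigma(2\Prox_{\alpha f}(z) - z)$, and the corresponding maps to $(y^{k+1},u^{k+1})$ are all Lipschitz, so geometric convergence of $z^k$ transfers to geometric convergence of $(x^k,y^k,u^k)$ to the fixed point $(x^\star,u^\star)$ of PNP-ADMM described in the paper. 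The principal obstacle is really a bookkeeping one: verifying that the DRS/ADMM identification is consistent with any initialization of PNP-ADMM, so that the contraction result is not restricted to a specially chosen warm start. A minor additional point is that Theorem~\ref{thm:contraction} was stated with $f$ also assumed differentiable, whereas the corollary drops that hypothesis; since $T$ only involves $\Prox_{\alpha f}$ (never $\nabla f$) and the contraction estimate in the theorem is driven solely by strong monotonicity of $\partial f$, differentiability is inessential and the theorem's proof goes through in the slightly more general setting of the corollary.
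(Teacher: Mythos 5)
Your proposal is correct and follows essentially the same route as the paper, whose proof of Corollary~\ref{cor:contraction} is exactly the reduction to Theorem~\ref{thm:contraction} via the PNP-DRS/PNP-ADMM change of variables ($z^k = y^k - u^k$) given in the supplementary document, followed by the Banach contraction principle and the Lipschitz transfer back to the $(x^k,y^k,u^k)$ iterates. Your closing observation that differentiability of $f$ is inessential is also consistent with the paper, which notes that Theorem~\ref{thm:contraction} extends to non-differentiable strongly convex $f$ via subgradients, since the contraction argument only uses $\Prox_{\alpha f}$ through Lemma~\ref{lem:giselsson}.
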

\begin{proof}
\vspace{-1em}
This follows from Theorem~\ref{thm:contraction}
and the equivalence of PNP-DRS and PNP-ADMM.
\end{proof}

%\subsection{Discussion}
%\label{ss:discuss}
For PNP-FBS, we assume
$f$ is $\mu$\nobreakdash-strongly convex and  $\nabla f$ is $L$-Lipschitz.
For PNP-DRS and PNP-ADMM, we assume
$f$ is $\mu$\nobreakdash-strongly convex.
These are standard assumptions
that are satisfied in application such as
% image denoising/deblurring \cite{rond2016,chan2017,wang2017} and single photon imaging \cite{chan2017}.
% The assumption of strong convexity, however, excludes a few applications 
% such as compressed sensing \cite{venkatkrishnan2013},
% sparse interpolation \cite{sreehari2016}, and super-resolution \cite{chan2017}.
image denoising/deblurring and single photon imaging.
Strong convexity, however, does exclude a few applications such as compressed sensing, sparse interpolation, and super-resolution.

PNP-FBS and PNP-ADMM are distinct methods for finding the same set of fixed points.
Sometimes, PNP-FBS is easier to implement since it only requires the computation of $\nabla f$ rather than $\Prox_{\alpha f}$.
On the other hand, PNP-ADMM has better convergence properties as demonstrated theoretically by
Theorems~\ref{thm:fbs-contraction} and \ref{thm:contraction}
and empirically by our experiments.
%of Section~\ref{s:poisson_experiments}.

The proof of Theorem~\ref{thm:contraction} relies on the notion of ``negatively averaged'' operators of \cite{giselsson2017}.
It is straightforward to modify Theorems~\ref{thm:fbs-contraction} and \ref{thm:contraction} to establish local convergence when Assumption~\eqref{assumption:H} holds locally.
Theorem~\ref{thm:contraction} can be generalized to the case when $f$ is strongly convex but non-differentiable using the notion of subgradients.
%We avoided this issue to avoid the discussion and notation of subgradients.

Recently, \cite{fletcher2018} proved convergence of ``plug-and-play'' vector approximate message passing, a method similar to ADMM, assuming Lipschitz continuity of the denoiser.
Although the method, the proof technique, and the notion of convergence are different from ours, the similarities are noteworthy.

\section{Real spectral normalization: 
enforcing Assumption~\eqref{assumption:H}}
\label{s:denoiser}
We now present real spectral normalization, a technique for training denoisers to satisfy Assumption~\eqref{assumption:H}
and connect the practical implementations to the theory of Section~\ref{s:ct}.

%we discuss the deep denoisers we use for the PnP method
\subsection{Deep learning denoisers: SimpleCNN and DnCNN}
We use a deep denoising model called \textbf{DnCNN} \cite{zhang2017beyond}, which learns the residual mapping  with a 17-layer CNN and reports state-of-the-art results on natural image denoising.
Given a noisy observation $y = x + e$, where $x$ is the clean image and $e$ is noise, the residual mapping $R$ outputs the noise, i.e., $R(y) = e$ so that $y - R(y)$ is the clean recovery.
Learning the residual mapping is a popular approach in deep learning-based image restoration.

We also construct a simple convolutional encoder-decoder model for denoising and call it \textbf{SimpleCNN}.
SimpleCNN consists of 4 convolutional layers, with ReLU and mean-square-error (MSE) loss and does not utilize any pooling or (batch) normalization. 

We remark that realSN and the theory of this work is applicable to other deep denoisers.
We use SimpleCNN to show that realSN is applicable to any CNN denoiser.

\subsection{Lipschitz constrained deep denoising}
\label{subsec:lip-DnCNN}
%We then propose the Lipschitz constrained CNN to align the practical use of deep denoisers with the theory of PnP framework.
Denote the denoiser (SimpleCNN or DnCNN) as $H(y)=y-R(y)$, where $y$ is the noisy input and $R$ is the residual mapping, i.e., $R(y)=y-H(y)=(I-H)(y)$. 
Enforcing Assumption~\eqref{assumption:H}
is equivalent to constraining the Lipschitz constant of $R(y)$.
We propose a variant of the spectral normalization (\textbf{SN}) \cite{miyato2018spectral} for this.

%\vspace{-1em}
\paragraph{Spectral normalization.} \cite{miyato2018spectral} proposed to normalize the spectral norm of each layer-wise weight (with ReLU non-linearity) to one. Provided that we use $1$-Lipschitz nonlinearities (such as ReLU), the Lipschitz constant of a layer is upper-bounded by the spectral norm of its weight, and the Lipschitz constant of the full network is bounded by the product of spectral norms of all layers \cite{gouk2018regularisation}.
%, which should have also been one after applying SN.
To avoid the prohibitive cost of singular value decomposition (SVD) every iteration, SN approximately computes the largest singular values of weights using a small number of power iterations.  

Given the weight matrix $W_l\in\mathbb{R}^{m\times n}$ of the $l$-th layer, vectors $u_l\in\mathbb{R}^m, v_l\in\mathbb{R}^m$ are initialized randomly and maintained in the memory to estimate the leading first left and right singular vector of $W_l$ respectively.
During each forward pass of the network, SN is applied to all layers $1\leq l \leq L$  following the two-step routine:
\vspace{-5pt}
\begin{enumerate} \itemsep -10pt
    \item Apply one step of the power method to update $u_l,v_l$:
        \begin{align*}
            &v_l \leftarrow W_l^T u_l\ /\ \|W_l^T u_l\|_2, ~~~~ u_l \leftarrow W_l v_l\ /\ \|W_l v_l\|_2
        \end{align*}
    \item Normalize $W_l$ with the estimated spectral norm:
        \[
            % \tilde{W}_\mathrm{SN}(W_l)
            W_l \leftarrow W_l / \sigma(W_l),
            \ \mathrm{where}\ \sigma(W_l) = u_l^T W_l v_l
        \]
\end{enumerate}

\vspace{-5pt}

While the basic methodology of SN suits our goal, the SN in \cite{miyato2018spectral} uses a convenient but inexact implementation for convolutional layers. A convolutional layer is represented by a four-dimensional kernel $K_l$ of shape $(C_\mathrm{out}, C_\mathrm{in}, h, w)$, where $h, w$ are kernel's height and width. SN reshapes $K_l$ into a two-dimensional matrix $\tilde{K}_l$ of shape $(C_\mathrm{out}, C_\mathrm{in} \times h \times w)$ and regards $\tilde{K}_l$ as the matrix $W_l$ above.
This relaxation \textbf{underestimates} the true spectral norm of the convolutional operator (Corollary 1 of \cite{tsuzuku2018lipschitz}) given by
\vspace{-2mm}
\[ \sigma(K_l) = \max_{x\neq 0} \|K_l * x\|_2 / \|x\|_2, 
\vspace{-2mm}\]
where $x$ is the input to the convolutional layer and $*$ is the convolutional operator.
This issue is not hypothetical.
When we trained SimpleCNN with SN, the spectral norms of the layers were $3.01$, $2.96$, $2.82$, and $1.31$, i.e., SN failed to control the Lipschitz constant below $1$.

%\vspace{-1em}
\paragraph{Real spectral normalization.}
We propose an improvement to SN for convolutional\footnote{We use stride 1 and  zero-pad with width 1 for convolutions.} layers, called the \textbf{real spectral normalization} (realSN), 
to more accurately constrain the network's Lipschitz constant
and thereby enforce Assumption~\eqref{assumption:H}.

In realSN, we directly consider the convolutional linear operator $\mathcal{K}_l:  \mathbb{R}^{C_\mathrm{in} \times h \times w} \to  \mathbb{R}^{C_\mathrm{out} \times h \times w}$, where
    $h,w$ are input's height and width,
    instead of reshaping the convolution kernel $K_l$ into a matrix.
    The power iteration also requies the conjugate (transpose) operator $\mathcal{K}^*_l$. It can be shown that $\mathcal{K}^*_l$ is another convolutional operator with a kernel that is a rotated version of the forward convolutional kernel; the first two dimensions are permuted and the last two dimensions are rotated by 180 degrees \citep{liu2018alista}.
    Instead of two vectors $u_l,v_l$ as in SN, realSN maintains $U_l\in \mathbb{R}^{C_\mathrm{out} \times h \times w}$ and 
    $V_l\in \mathbb{R}^{C_\mathrm{in} \times h \times w}$ to estimate the leading left and right singular vectors respectively. During each forward pass of the neural network, realSN conducts:
% and matrix-vector multiplications with convolutions:
\vspace{-5pt}
\begin{enumerate} \itemsep -10pt
    \item Apply one step of the power method with operator $\mathcal{K}_l$:
        \begin{align*}
            &V_l \leftarrow \mathcal{K}_l^*(U_l)\ /\ \|\mathcal{K}_l^*(U_l)\|_2, \\
            &U_l \leftarrow \mathcal{K}_l(V_l)\ /\ \|\mathcal{K}_l(V_l)\|_2.
        \end{align*}
    \item Normalize the convolutional kernel $K_l$ with estimated spectral norm:
        \[
            K_l \leftarrow K_l / \sigma(\mathcal{K}_l),
            \ \mathrm{where}\ \sigma(\mathcal{K}_l) = \langle U_l, \mathcal{K}_l( V_l ) \rangle
        \]
\end{enumerate}

\vspace{-5pt}
By replacing $\sigma(\mathcal{K}_l)$ with $\sigma(K_l)/c_l$, realSN can constrain the Lipschitz constant to any upper bound $C=\prod_{l=1}^L c_l$.
Using the highly efficient convolution computation in modern deep learning frameworks, realSN can be implemented simply and efficiently.
Specifically, realSN introduces three additional one-sample convolution operations for each layer in each training step. 
When we used a batch size of $128$, the extra computational cost of the additional operations is mild.

\begin{figure*}
\centering
\begin{tabular}{ccccc}
\hspace{-7mm}
\subfigure[BM3D]{
      \includegraphics[width=0.22\textwidth]{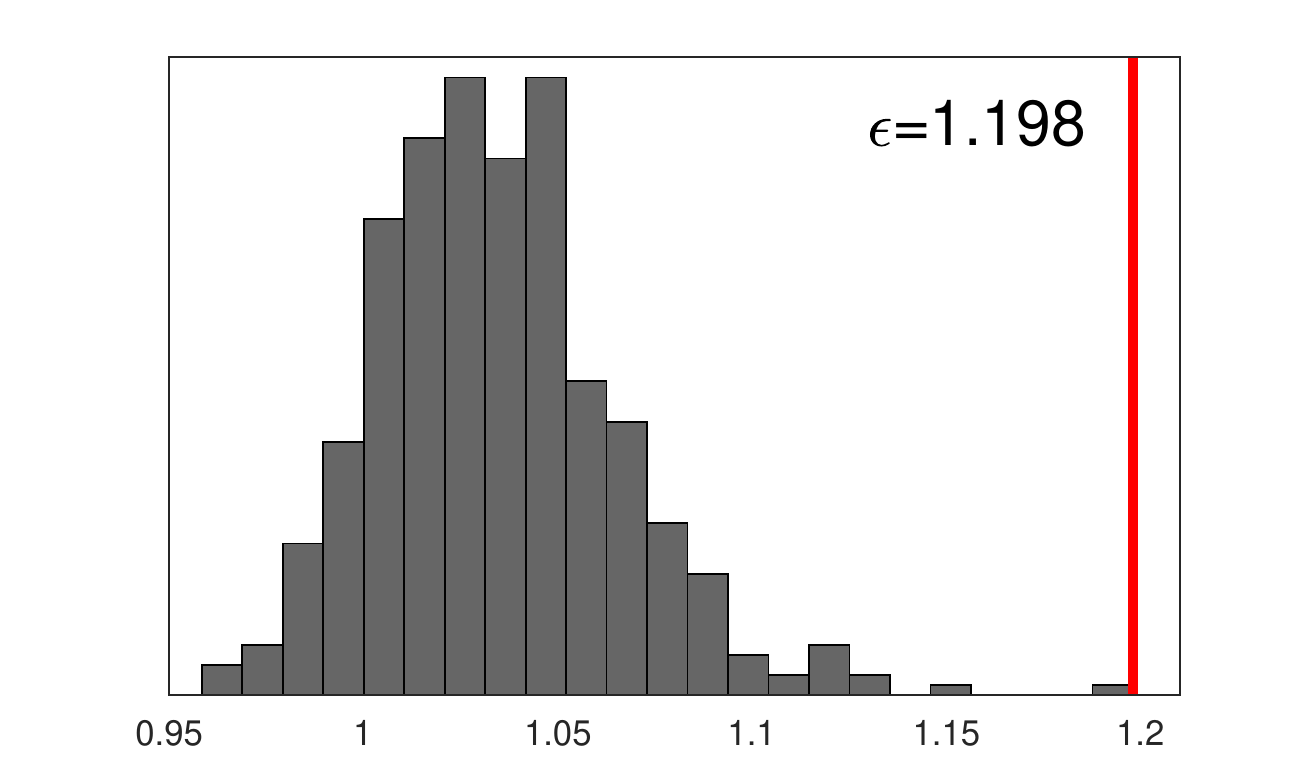}\label{fig:hist_bm3d}}
&
\hspace{-9mm}
\subfigure[SimpleCNN]{
      \includegraphics[width=0.22\textwidth]{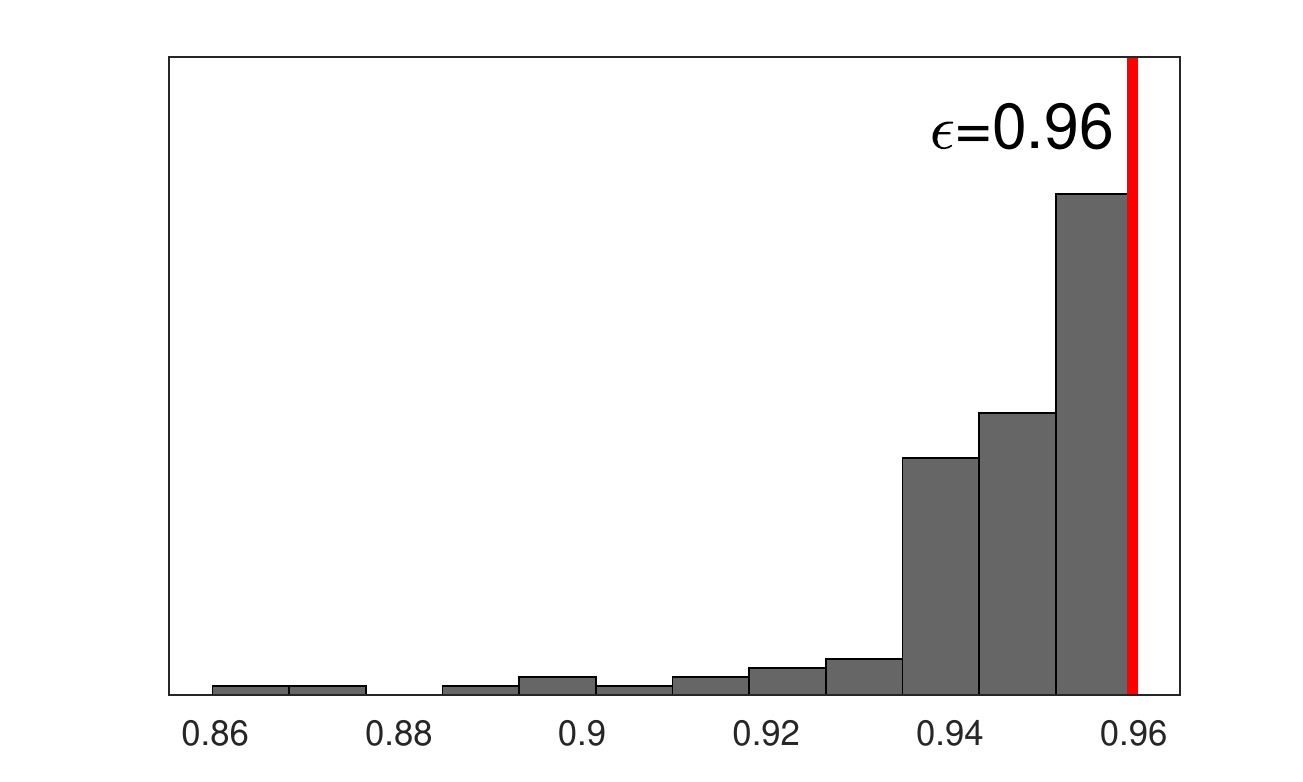}\label{fig:hist_simplecnn}}
  &
  \hspace{-9mm}
\subfigure[RealSN-SimpleCNN]{
      \includegraphics[width=0.22\textwidth]{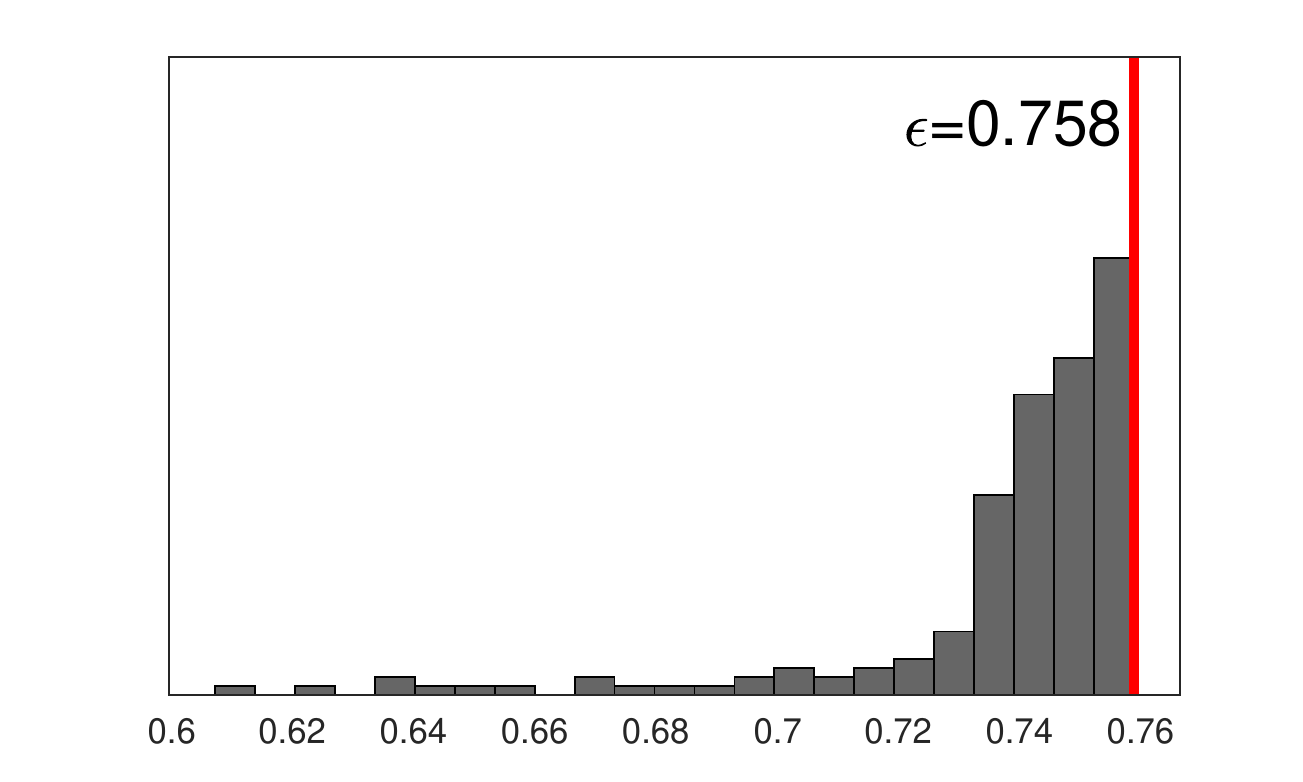}\label{fig:hist_simplecnn2}}
      &
      \hspace{-9mm}
\subfigure[DnCNN]{
      \includegraphics[width=0.22\textwidth]{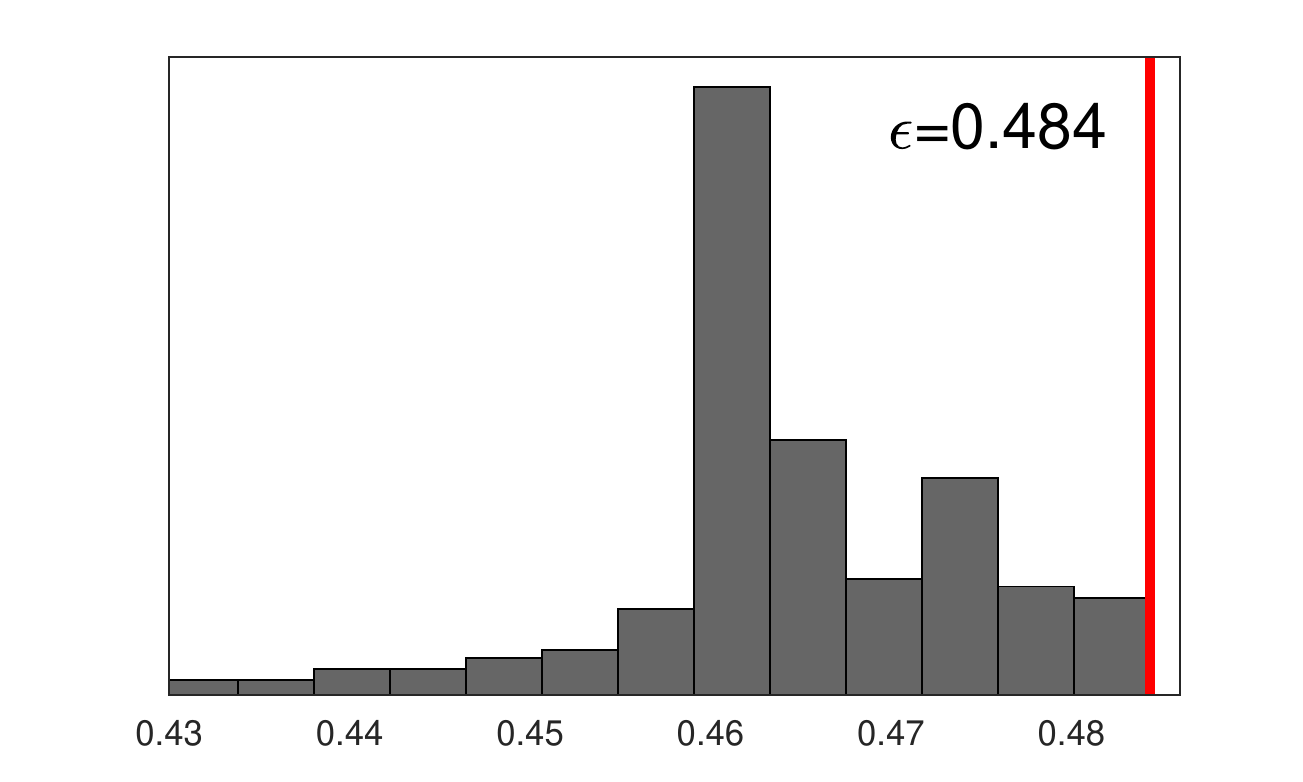}\label{fig:hist_dncnn}}
  &
\hspace{-9mm}
\subfigure[RealSN-DnCNN]{
      \includegraphics[width=0.22\textwidth]{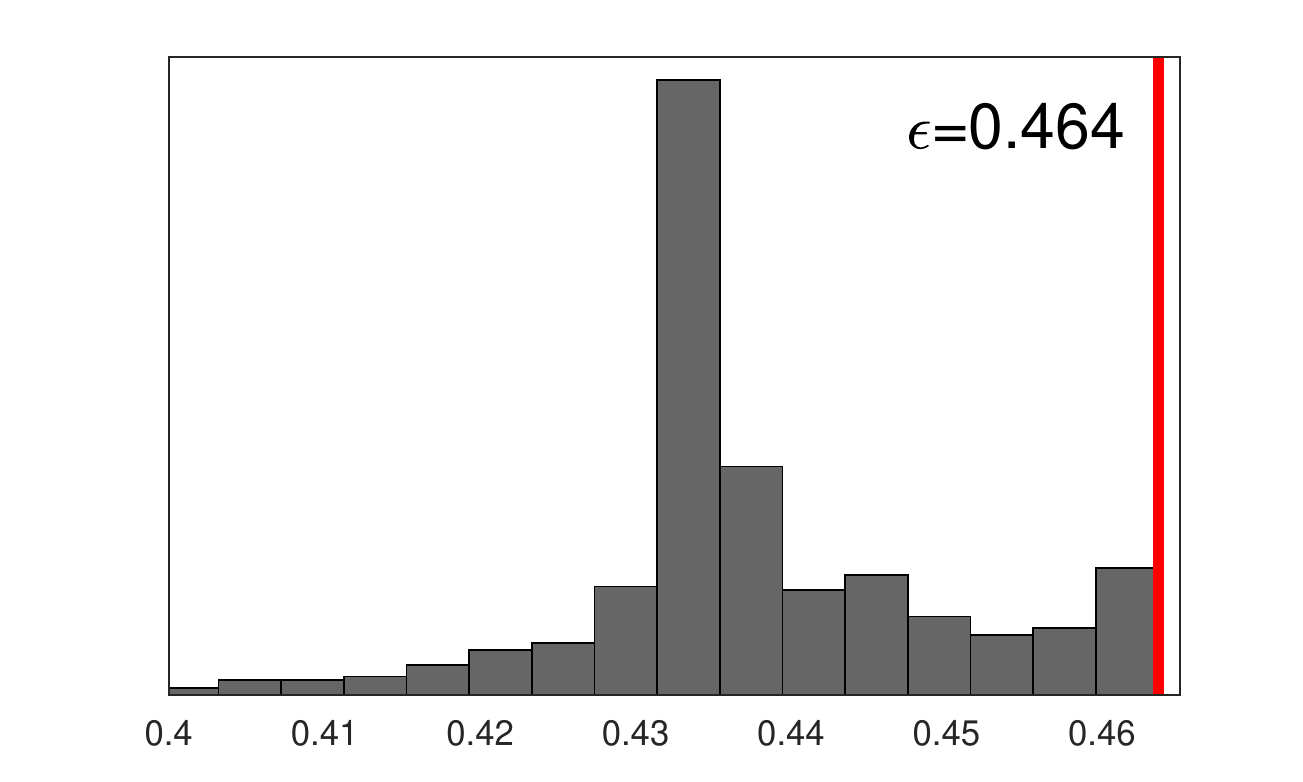}\label{fig:hist_dncnn2}}
\end{tabular}
\vspace{-0.5em}
\caption{Histograms for experimentally verifying Assumption~\eqref{assumption:H}.
The x-axis represents values of $\|(I-H_\sigma)(x)-(I-H_\sigma)(y)\|/\|x-y\|$ and the y-axis represents the frequency.
The vertical red bar corresponds to the maximum value.
}\vspace{-0.5em}
\label{fig:histogram}
\end{figure*}

\subsection{Implementation details}
%\subsubsection{Training and Testing Data}
We refer to SimpleCNN and DnCNN regularized by realSN as \textbf{RealSN-SimpleCNN} and \textbf{RealSN-DnCNN}, respectively. We train them in the setting of Gaussian denoising with known fixed noise levels $\sigma = 5, 15, 40$. We used $\sigma = 5, 15$ for CS-MRI and single photon imaging, and $\sigma = 40$ for Poisson denoising. The regularized denoisers are trained to have Lipschitz constant (no more than) 1. The training data consists of images from the BSD500 dataset, divided into $40 \times 40$ patches. The CNN weights were initialized in the same way as \cite{zhang2017beyond}. We train all networks using the ADAM optimizer for 50 epochs, with a mini-batch size of 128. The learning rate was $10^{-3}$ in the first 25 epochs, then decreased to $10^{-4}$. 
On an Nvidia GTX 1080 Ti, DnCNN took 4.08 hours and realSN-DnCNN took 5.17 hours to train, so the added cost of realSN is mild.

\section{Poisson denoising: validating the theory}
\label{s:poisson_experiments}
Consider the Poisson denoising problem,
where given a true image $x_\mathrm{true}\in \reals^d$, we observe independent Poisson random variables $y_i \sim \text{Poisson}((x_\mathrm{true})_i)$, so $y_i\in \mathbb{N}$, for $i=1,\dots,d$. 
For details and motivation for this problem setup, see \cite{rond2016}.

For the objective function $f(x)$, we use the negative log-likelihood  given by $f(x) = \sum^d_{i=1}\ell(x_i;y_i)$,
where
\[
\ell(x;y)=\left\{
\begin{array}{ll}
-y\log (x)+x&\text{ for }y>0,\,x>0\\
0&\text{ for }y=0,\,x\ge 0\\
\infty&\text{ otherwise.}
\end{array}
\right.
\]
We can compute $\Prox_{\alpha f}$ elementwise with
\vspace{-0.25em}
\[
\Prox_{\alpha f}(x)=
(1/2)\left(x-\alpha +\sqrt{(x-\alpha)^2+4\alpha y}\right).
\]
\vspace{-0.25em}
The gradient of $f$ is given by $\partial f/\partial x_i=-y_i/x_i+1$ for $x_i>0$ for $i=1,\dots,d$.
We set $\partial f/\partial x_i=0$ when $x_i=0$, although, strictly speaking, $\partial f/\partial x_i$ is undefined when $y_i>0$ and $x_i=0$.
This does not seem to cause any problems in the experiments.
Since we force the denoisers to output nonnegative pixel values,
PNP-FBS never needs to evaluate $\partial f/\partial x_i$ for negative $x_i$.

%For $H_\sigma$, we choose BM3D \cite{dabov2007}, SimpleCNN, and DnCNN \cite{zhang2017beyond}.
For $H_\sigma$, we choose BM3D, SimpleCNN with and without realSN, and DnCNN with and without realSN.
%We further enforce realSN as the regularization to training SimpleCNN and DnCNN, denoted as SimpleCNN-realSN and DnCNN-realSN, respectively.
Note that these denoisers are designed or trained for the purpose of \textbf{Gaussian denoising}, and here we integrate them into the PnP frameworks for Poisson denoising.
We scale the image so that the peak value of the image, the maximum mean of the Poisson random variables, is $1$.
The $y$-variable was initialized to the noisy image for PnP-FBS and PnP-ADMM, and the $u$-variable was initialized to $0$ for PnP-ADMM.
We use the test set of 13 images in \cite{chan2017}.

\vspace{-0.5em}
\paragraph{Convergence.}
We first examine which denoisers satisfy Assumption~\eqref{assumption:H} with small $\varepsilon$.
In Figure~\ref{fig:histogram}, we run PnP iterations of Poisson denoising on a single image (flag of \cite{rond2016}) with different models, calculate $\|(I-H_\sigma)(x)-(I-H_\sigma)(y)\|/\|x-y\|$ between the iterates and the limit, and plot the histogram.
% In Figure~\ref{fig:histogram}, we plot histograms of $\|(I-H_\sigma)(x)-(I-H_\sigma)(y)\|/\|x-y\|$ for a set of inputs $x,y\in\reals^d$.
The maximum value of a histogram, marked by a vertical red bar, lower-bounds the $\varepsilon$ of Assumption~\eqref{assumption:H}.
Remember that Corollary~\ref{cor:contraction} requires $\varepsilon<1$ to ensure convergence of PnP-ADMM.
Figure~\ref{fig:hist_bm3d} proves that BM3D violates this assumption.
%Therefore, any convergence of BM3D is not backed by theory.
%Figures~\ref{fig:hist_simplecnn} and \ref{fig:hist_simplecnn2} and
Figures \ref{fig:hist_simplecnn} and \ref{fig:hist_simplecnn2} and Figures~\ref{fig:hist_dncnn} and \ref{fig:hist_dncnn2} respectively illustrate that RealSN indeed improves (reduces) $\varepsilon$ for SimpleCNN and DnCNN.

%tell us that the real spectral normalization technique indeed improves (reduces) $\varepsilon$ for DnCNN, but not simpleCNN.

Figure~\ref{fig:contractions} experimentally validates Theorems~\ref{thm:fbs-contraction} and \ref{thm:contraction}, by examining the average (geometric mean) contraction factor (defined in Section \ref{s:ct}) of PnP-FBS and ADMM\footnote{We compute the contraction factor of the equivalent PnP-DRS.} iterations over a range of step sizes.
%See Figure~\ref{fig:contractions}.
Figure~\ref{fig:contractions}  qualitatively shows that PnP-ADMM exhibits more stable convergence than PnP-FBS.
Theorem~\ref{thm:fbs-contraction} ensures PnP-FBS is a contraction when $\alpha$ is within an interval and Theorem~\ref{thm:contraction} ensures PnP-ADMM is a contraction when $\alpha$ is large enough.
We roughly observe this behavior for the denoisers trained with RealSN.

\setlength\tabcolsep{3pt}
\begin{table}[h]
\centering
\small
\vspace{-1em}
\caption{Average PSNR performance (in dB) on Poisson denoising (peak $= 1$) on the testing set in \cite{chan2017}}
\label{tab:poisson_best}
\begin{tabular}{|c|c|c|c|}
\hline
% \multicolumn{6}{c}{PnP-DRS, $\alpha = 0.1$}\\ \hline
& BM3D    &   RealSN-DnCNN   & RealSN-SimpleCNN \\ 
%& BM3D    &   RSN-DnCNN   & RSN-SimpleCNN \\ 
%& & DnCNN & SimpleCNN \\
\hline\hline
%Iteration 50  & 23.3393 &  23.4580  &  23.4709  &  14.4165  & 18.7877\\ \hline
%Iteration 100 & 23.3227 &  23.4785  &  23.4834  &  14.2642  & 18.7880\\ \hline
%Best Overall            & 23.4617 &  23.6368  &  23.5873  &  16.7160  & 18.7890 \\
PNP-ADMM           & 23.4617   &  \textbf{23.5873}    & 18.7890 \\ \hline
%\multicolumn{6}{c}{PnP-FBS, $\alpha = 0.0125$}\\ \hline
%Average PSNR         & BM3D    &  DnCNN    &  RealSN-DnCNN   & SimpleCNN  & RealSN-SimpleCNN \\ \hline\hline
%Iteration 50  & 13.9490 &  18.5296  &  20.9351  &  17.5798  & 18.4302\\ \hline
%Iteration 100 & 18.2140 &  18.5296  &  20.7346  &  16.7609  & 15.1735\\ \hline
%Best Overall  & 18.5835 &  22.6624  &  22.2154  &  21.9649  & 22.7280 \\ \hline\hline
PNP-FBS  & 18.5835   &  22.2154  & \textbf{22.7280} \\ \hline
\end{tabular}
%\vspace{-0.5em}
\end{table}
\vspace{-1em}
\paragraph{Empirical performance.}
Our theory only concerns convergence and says nothing about the recovery performance of the output the methods converge to.
We empirically verify that the PnP methods with RealSN, for which we analyzed convergence, yield competitive denoising results.

We fix $\alpha=0.1$ for all denoisers in PNP-ADMM, and $\alpha=0.0125$ in PNP-FBS. For deep learning-based denoisers, we choose $\sigma=40/255$.
For BM3D, we choose $\sigma = \sqrt{\gamma \alpha}$ as suggested in \cite{rond2016} and use $\gamma=1$.

Table \ref{tab:poisson_best} compares the PnP methods with BM3D, RealSN-DnCNN, and RealSN-SimpleCNN plugged in. 
%The PnP schemes evidently improves all three Gaussian denoisers for this task. 
In both PnP methods, one of the two denoisers using RealSN, for which we have theory, outperforms BM3D.
It is interesting to obverse that the PnP performance does not necessarily hinge on the strength of the plugged in denoiser and that different PnP methods favor different denoisers.
For example, RealSN-SimpleCNN surpasses the much more sophisticated RealSN-DnCNN under PnP-FBS. However, RealSN-DnCNN leads to better, and overall best, denoising performance when plugged into PnP-ADMM.

\begin{figure}[]
\centering
%\hspace{-5mm}
\subfigure[PnP-FBS]{
      \includegraphics[width=0.45\textwidth]{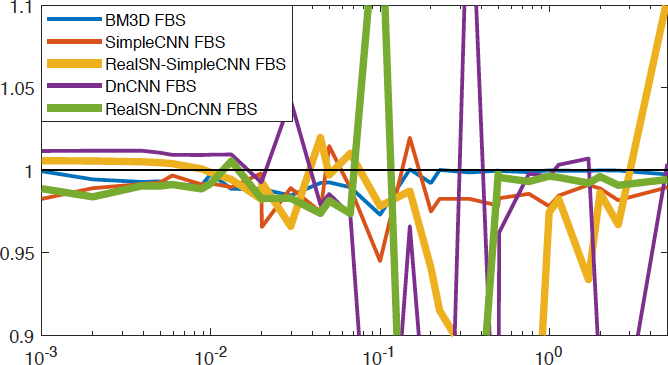}\label{doirjg}}\\
            %\vspace{-1.em}
      %\hspace{-10mm}
\subfigure[PNP-ADMM]{
      \includegraphics[width=0.45\textwidth]{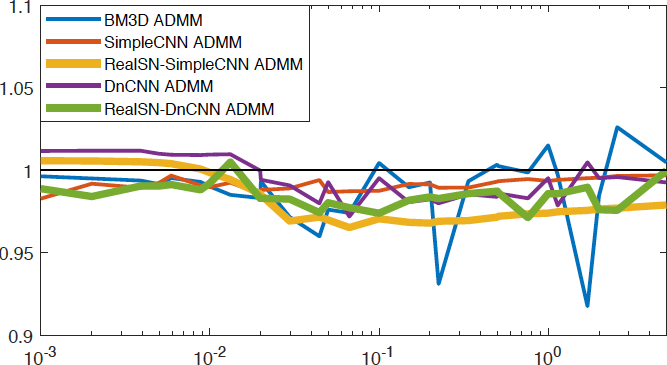}\label{sidfjasoi}}
      \vspace{-0.5em}
      \caption{Average contraction factor of 500 iterations for the Poisson denoising experiment.
      The x-axis represents the value of $\alpha$ and y-axis represents the contraction factor. Although lower means faster convergence,  a smoother curve means the method is easier to tune and has more stable convergence.
     }
     \vspace{-0.5em}
      \label{fig:contractions}
\end{figure}

\section{More applications}
\label{s:more_apple}
We now apply PnP on two imaging problems% where the objective functions $f$ are not strongly convex.
%Although without convergence guarantee here,
and show that RealSN improves the reconstruction of PnP.%
%Code is available at \url{https://github.com/xxxxxxxx/xxxx}
\footnote{Code for our experiments in 
Sections~\ref{s:poisson_experiments} and \ref{s:more_apple}
is available at \url{https://github.com/uclaopt/Provable_Plug_and_Play/}}

\newpage

\vspace{-0.5em}
\paragraph{Single photon imaging.}
Consider single photon imaging with quanta image sensors (QIS) \cite{fossum2011quanta,chan2014efficient,elgendy2016image}
with the model
\[z = \mathbf{1}(y \geq 1), \quad y \sim \text{Poisson}(\alpha_{sg} G x_\mathrm{true})\]
where $x_\mathrm{true}\in\reals^{d}$ is the underlying image, $G:\reals^{d}\to\reals^{dK}$ duplicates each pixel to $K$ pixels, $\alpha_{sg}\in \reals$ is sensor gain, $K$ is the oversampling rate, $z\in \{0,1\}^{dK}$ is the observed binary photons.  We want to recover $x_\mathrm{true}$ from $z$. %\jialin{Here we take $G$ as an operator that duplicating each pixel by $K$ times. With this operator $G$,}
The likelihood function is
\[f(x) = \sum_{j=1}^n -K^0_j \log(e^{-\alpha_{sg} x_j/K}) - K^1_j \log(1-e^{-\alpha_{sg} x_j/K}),\]
where $K^1_j = \sum_{i=1}^K z_{(j-1)K+i}$ is the number of ones in the $j$-th unit pixel, $K^0_j = \sum_{i=1}^K 1-z_{(j-1)K+i}$ is the number of zeros in the $j$-th unit pixel. The gradient of $f(x)$ is given by $\partial f / \partial x_j = (\alpha_{sg}/K) (K^0_j - K^1_j / (e^{\alpha_{sg} x_j/K} -1))$ and the proximal operator of $f$ is given in \cite{chan2014efficient}. 
%{\color{red} accurate? cite?}
%\ernest{What is the role of $G$?}

We compare PnP-ADMM and PnP-FBS respectively with the denoisers BM3D, RealSN-DnCNN, and RealSN-SimpleCNN. We take $\alpha_{sg} = K = 8$.
The $y$-variable was initialized to $K^1$ for PnP-FBS and PnP-ADMM, and the $u$-variable was initialized to $0$ for PnP-ADMM.
All deep denoisers used in this experiment were trained with fixed noise level $\sigma= 15$. We report the PSNRs achieved at the 50th iteration, the 100th iteration, and the best PSNR values achieved within the first 100 iterations. 

Table \ref{tab:SPI} reports the average PSNR results on the 13 images used in \cite{chan2017}.
Experiments indicate that PnP-ADMM methods constantly yields higher PNSR than the PnP-FBS counterparts using the same denoiser. The best overall PSNR is achieved using PnP-ADMM with RealSN-DnCNN, which shows nearly 1dB improvement over the result obtained with BM3D. We also observe that deep denoisers with RealSN make PnP converges more stably.

\setlength\tabcolsep{6pt}

\begin{table*}
\centering
\caption{Average PSNR (in dB) of single photon imaging task on the test set of \cite{chan2017} }
\label{tab:SPI}
\begin{tabular}{c|c|c|c|}
\hline
\multicolumn{4}{c|}{PnP-ADMM, $\alpha = 0.01$}\\ \hline
Average PSNR  & BM3D    &  RealSN-  & RealSN- \\
 &    &  DnCNN  & SimpleCNN \\
\hline\hline
Iteration 50  & 30.0034   &  31.0032        & 29.2154\\ \hline
Iteration 100 & 30.0014   &  31.0032        & 29.2151\\ \hline
Best Overall  & 30.0474   &  \textbf{31.0431}        & 29.2155 \\ \hline
\end{tabular}
\begin{tabular}{|c|c|c|c}
\hline
\multicolumn{4}{|c}{PnP-FBS, $\alpha = 0.005$}\\ \hline
Average PSNR  & BM3D    &  RealSN-  & RealSN- \\
 &    &  DnCNN  & SimpleCNN \\ \hline\hline
Iteration 50  & 28.7933   &  27.9617       & 29.0062\\ \hline
Iteration 100 & 29.0510   &  27.9887       & 29.0517\\ \hline
Best Overall  & \textbf{29.5327}   &  28.4065       & 29.3563 \\ \hline
\end{tabular}
\end{table*}

% \begin{figure*}
% \centering
% \begin{tabular}{ccc}
% \hspace{-5mm}
% \subfigure[BM3D]{
%       \includegraphics[width=0.35\textwidth]{img/srg_bm3d_admm_bm3d_lam10.eps}\label{fig:admm_kend_bm3d}}
% &
% \hspace{-20mm}
% \subfigure[DnCNN]{
%       \includegraphics[width=0.35\textwidth]{img/srg_dncnn_admm_cnn_lam10.eps}\label{fig:admm_kend_nlm}}
%   &
% \hspace{-20mm}
% \subfigure[Real SN]{
%       \includegraphics[width=0.35\textwidth]{img/srg_sn11111_admm_lam10.png}\label{fig:admm_kend_nlm}}
% \end{tabular}
% \caption{SRG of denoisers: check the assumption A}
% \label{fig:case3}
% \end{figure*}

% \begin{figure*}
% \centering
% \begin{tabular}{ccc}
% \hspace{-5mm}
% \subfigure[BM3D]{
%       \includegraphics[width=0.35\textwidth]{img/srg_admm_bm3d_lam10.eps}\label{fig:admm_kend_bm3d}}
% &
% \hspace{-20mm}
% \subfigure[DnCNN]{
%       \includegraphics[width=0.35\textwidth]{img/srg_admm_cnn_lam10.eps}\label{fig:admm_kend_nlm}}
%   &
% \hspace{-20mm}
% \subfigure[Real SN]{
%       \includegraphics[width=0.35\textwidth]{img/srg_admm_sn11111_lam10.png}\label{fig:admm_kend_nlm}}
% \end{tabular}
% \caption{SRG of PnP-ADMM operator $T$: effect of Lipschitz constant $\varepsilon$}
% \label{fig:case3}
% \end{figure*}

% \begin{figure*}
% \centering
% \begin{tabular}{cc}
% \hspace{-5mm}
% \subfigure[PnP-ADMM]{
%       \includegraphics[width=0.45\textwidth]{img/drs.eps}\label{fig:admm_kend_bm3d}}
% &
% \hspace{-10mm}
% \subfigure[PnP-FBS]{
%       \includegraphics[width=0.45\textwidth]{img/fbs.eps}\label{fig:admm_kend_nlm}}
% \end{tabular}
% \caption{Effect of $H_\sigma$ on the convergence}
% \label{fig:case3}
% \end{figure*}

\begin{table*}[h]
\centering
\caption{CS-MRI results (30\% sample with additive Gaussian noise $\sigma_e=15$) in PSNR (dB). %All deep denoisers are trained on natural images.
%but they work well in the PnP framework on medical imaging task\footnotemark. The ``spectral normalization" (RealSN) technique helps improve the reconstruction performance of PnP-ADMM and PnP-FBS.
} 
\label{tab:ct}
\begin{tabular}{c|c|c|c|c|c|c|c}
\hline
  \multicolumn{2}{c}{Sampling approach}  & \multicolumn{2}{|c}{Random} & \multicolumn{2}{|c}{Radial} & \multicolumn{2}{|c}{Cartesian}    \\ \hline
  \multicolumn{2}{c|}{Image} & Brain &  Bust  & Brain & Bust   & Brain & Bust \\\hline\hline
\multicolumn{2}{c|}{Zero-filling}  &    9.58      &     7.00     &       9.29    &             6.19     &   8.65                      &   6.01    \\ \hline
\multicolumn{2}{c|}{TV \cite{lustig2005application}}  &  16.92  &  15.31  &  15.61  &  14.22  &  12.77  &  11.72  \\ \hline
\multicolumn{2}{c|}{RecRF \cite{yang2010fast}} &     16.98    &     15.37         &    16.04   &              14.65       & 12.78                   &    11.75      \\ \hline
\multicolumn{2}{c|}{BM3D-MRI \cite{eksioglu2016decoupled}}    &      17.31    &      13.90&         16.95   &     13.72   &    14.43                          &  12.35    \\ \hline\hline

\multirow{5}{*}{PnP-FBS} & BM3D   &     19.09     &     16.36     &        18.10   &            15.67  &    14.37                    &   12.99      \\ \cline{2-8}
& DnCNN  &      19.59    &       16.49        &      18.92   &            15.99     &   14.76                  &    14.09     \\ \cline{2-8}
& RealSN-DnCNN & \textbf{19.82}            & \textbf{16.60} &  \textbf{18.96}         & \textbf{16.09}  &     \textbf{14.82}        &   \textbf{14.25}                     \\ \cline{2-8}
& SimpleCNN        & 15.58 & 12.19 & 15.06 & 12.02 & 12.78 & 10.80 \\ \cline{2-8}
& RealSN-SimpleCNN & 17.65 & 14.98 & 16.52 & 14.26 & 13.02 & 11.49 \\ \hline\hline

\multirow{5}{*}{PnP-ADMM} & BM3D & 19.61 &  \textbf{17.23}  &  18.94  &  \textbf{16.70}  &  14.91                      &   13.98     \\ \cline{2-8}
& DnCNN  &         19.86 &      17.05        &       19.00   &         16.64     &   14.86                    &   14.14        \\ \cline{2-8}
& RealSN-DnCNN &       \textbf{19.91}      &      {17.09}      &       \textbf{19.08}   &       {16.68}      &   \textbf{15.11}                &    \textbf{14.16}       \\ \cline{2-8}
& SimpleCNN        & 16.68 & 12.56  &  16.83 &  13.47  &    13.03    &  11.17  \\ \cline{2-8}
& RealSN-SimpleCNN & 17.77 & 14.89  &  17.00 &  14.47  &    12.73    &  11.88  \\ \hline
\end{tabular}
\end{table*}

\vspace{-0.5em}
\paragraph{Compressed sensing MRI.} 
Magnetic resonance imaging (MRI) is a widely-used imaging technique with
a slow data acquisition. Compressed sensing MRI (CS-MRI) accelerates MRI by acquiring less data through downsampling.
PnP is useful in medical imaging as we do not have a large amount of data for end-to-end training: we train the denoiser $H_\sigma$ on natural images, and then ``plug'' it into the PnP framework to be applied to medical images. CS-MRI is described mathematically as
\[y = \mathcal{F}_p x_{\mathrm{true}} + \varepsilon_e,
\]
where $x_{\mathrm{true}} \in \mathbb{C}^d$ is the underlying image, $\mathcal{F}_p:\mathbb{C}^d\to\mathbb{C}^k$ is the linear measurement model, $y \in \mathbb{C}^k$ is the measured data, and $\varepsilon_e\sim N(0,\sigma_eI_k)$ is measurement noise. We want to recover $x_\mathrm{true}$ from $y$. The objective function is
\[f(x) = (1/2) \|y - \mathcal{F}_p x\|^2.\]The gradient of $f(x)$ is given in \cite{liu2016projected} and the proximal operator of $f(x)$ is given in \cite{eksioglu2016decoupled}. 
We use BM3D, SimpleCNN and DnCNN, and their variants by RealSN for the PnP denoiser $H_\sigma$.
%{\color{red} accurate? cite?}

We take $\mathcal{F}_p$ as the Fourier k-domain subsampling (partial Fourier operator). We tested random, radial, and Cartesian sampling \cite{eksioglu2016decoupled} with a sampling rate of $30\%$. The noise level $\sigma_e$ is taken as $15/255$. 

We compare PnP frameworks with zero-filling, total-variation (TV) \cite{lustig2005application}, RecRF \cite{yang2010fast}, and BM3D-MRI \cite{eksioglu2016decoupled} \footnote{Some recent deep-learning based methods \cite{sun2016deep,kulkarni2016reconnet,metzler2017learned,zhang2017ista} are not compared here because we assume we do not have enough medical images for training.}. The parameters are taken as follows. For TV, the regularization parameter $\lambda$ is taken as the best one from $\{a \times 10^b, a \in \{1,2,5\}, b\in \{-5,-4,-3,-2,-1,0,1\}\}$. For RecRF, the two parameters $\lambda,\mu$ are both taken from the above sets and the best results are reported. For BM3D-MRI, we set the ``final noise level (the noise level in the last iteration)" as $2\sigma_e$, which is suggested in their MATLAB library. For PnP methods with $H_\sigma$ as BM3D, we set $\sigma = 2\sigma_e$, take $\alpha \in \{0.1,0.2,0.5,1,2,5\}$ and report the best results. For PNP-ADMM with $H_\sigma$ as deep denoisers, we take $\sigma = \sigma_e = 15/255$ and $\alpha = 2.0$ uniformly for all the cases. For PNP-FBS with $H_\sigma$ as deep denoisers, we take $\sigma = \sigma_e /3 = 5/255$ and $\alpha = 0.4$ uniformly. All deep denoisers are trained on BSD500 \cite{MartinFTM01}, a natural image data set; no medical image is used in training.
The $y$-variable was initialized to the zero-filled solution for PnP-FBS and PnP-ADMM, and the $u$-variable was initialized to $0$ for PnP-ADMM.
Table \ref{tab:ct} reports our results on CS-MRI, from which we can confirm the effectiveness of PnP frameworks. Moreover, using RealSN-DnCNN seems to the clear winner over all. We also observe that PnP-ADMM generally outperforms PnP-FBS when using the same denoiser, which supports Theorems \ref{thm:fbs-contraction} and \ref{thm:contraction}.

 \section{Conclusion}
In this work, we analyzed the convergence of PnP-FBS and PnP-ADMM under a Lipschitz assumption on the denoiser.
We then presented real spectral normalization a technique to enforce the proposed Lipschitz condition in training deep learning-based denoisers.
Finally, we validate the theory with experiments.
%  Code for experiments 
%  is available at \url{https://github.com/uclaopt/Provable_Plug_and_Play/}

\section*{Acknowledgements}
We thank Pontus Giselsson for the discussion on negatively averaged operators and Stanley Chan for the discussion on the difficulties in establishing convergence of PnP methods.
This work was partially supported by National Key R\&D Program of China 2017YFB02029, AFOSR MURI FA9550-18-1-0502, NSF DMS-1720237, ONR N0001417121, and NSF RI-1755701.

\bibliography{example_paper}
\bibliographystyle{icml2019}

%%%%%%%%%%%%%%%%%%%%%%%%%%%%%%%%%%%%%%%%%%%%%%%%%%%%%%%%%%%%%%%%%%%%%%%%%%%%%%%
%%%%%%%%%%%%%%%%%%%%%%%%%%%%%%%%%%%%%%%%%%%%%%%%%%%%%%%%%%%%%%%%%%%%%%%%%%%%%%%
% DELETE THIS PART. DO NOT PLACE CONTENT AFTER THE REFERENCES!
%%%%%%%%%%%%%%%%%%%%%%%%%%%%%%%%%%%%%%%%%%%%%%%%%%%%%%%%%%%%%%%%%%%%%%%%%%%%%%%
%%%%%%%%%%%%%%%%%%%%%%%%%%%%%%%%%%%%%%%%%%%%%%%%%%%%%%%%%%%%%%%%%%%%%%%%%%%%%%%
\onecolumn
\newpage

\section{Preliminaries}
%\subsection{Convex analysis}
For any $x,y\in \reals^d$, write $\langle x,y\rangle =x^Ty$ for the inner product.
We say a function $f:\reals^d\rightarrow\reals\cup\{\infty\}$ is convex if
\[
f(\theta x+(1-\theta)y)\le \theta f(x)+(1-\theta)f(y)
\]
for any $x,y\in \reals^d$ and $\theta\in[0,1]$.
A convex function is closed if it is lower semi-continuous and proper if it is finite somwhere.
We say $f$ is $\mu$-strongly convex for $\mu>0$ if
$f(x)-(\mu/2)\|x\|^2$
is a convex function.
% We say $\nabla f$ is $L$-Lipschitz continuous if
% $f$ is differentiable and if
% \[
% \|\nabla f(x)-\nabla f(y)\|^2\le L^2\|x-y\|^2
% \]
% for all $x,y\in \reals^d$.
% The inequality
% \[
% \mu\le L
% \]
% always holds.
 Given a convex function $f:\reals^d\rightarrow\reals\cup\{\infty\}$ and $\alpha>0$, define its proximal operator 
 $\Prox_f:\reals^d\rightarrow\reals^d$ as
 \[
 \Prox_{\alpha f}(z)=\argmin_{x\in \reals^d}
 \left\{ \alpha f(x)+(1/2)\|x-z\|^2 \right\}.
 \]
When $f$ is convex, closed, and proper, the $\argmin$ uniquely exists, and therefore $\Prox_f$ is well-defined.
An mapping $T:\reals^d\rightarrow \reals^d$ is $L$-Lipschitz if
\[
\|T(x)-T(y)\|\le L\|x-y\|
\]
for all $x,y,\in\reals^d$. 
If $T$ is $L$-Lipschitz with $L\le 1$, we say $T$ is nonexpansive.
If $T$ is $L$-Lipschitz with $L< 1$, we say $T$ is a contraction.
% and is a contraction if 
% \[
% \|T(x)-T(y)\|\le \gamma\|x-y\|
% \]
% for some $\gamma<1$,
A mapping $T:\reals^d\rightarrow \reals^d$ is $\theta$-averaged for $\theta\in (0,1)$,
if it is nonexpansive and if
\[
T=\theta R+(1-\theta)I,
\]
where $R:\reals^d\rightarrow\reals^d$ is another nonexpansive mapping.

\begin{lemma}
[Proposition~4.35 of \cite{BauschkeCombettes2017_convex}]
\label{lem:averagedness}
$T:\reals^d\rightarrow\reals^d$ is $\theta$-averaged if and only if
\[
\|T(x)-T(y)\| ^2
+(1-2\theta)\|x-y\| ^2\le
2(1-\theta)
\langle
T(x)-T(y),x-y\rangle
\]
for all $x,y\in \reals^d$.
\end{lemma}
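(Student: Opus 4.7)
The plan is to unwind the definition of $\theta$-averaged directly and match it against the stated inequality via a single line of algebra. Since $T$ is $\theta$-averaged exactly when $T = \theta R + (1-\theta)I$ for some nonexpansive $R$, the equivalent reformulation $R = (1/\theta)\bigl(T - (1-\theta)I\bigr)$ is well-defined (as $\theta \in (0,1)$), and $R$ is nonexpansive if and only if $\|R(x)-R(y)\|^2 \le \|x-y\|^2$ for all $x,y \in \reals^d$. So the whole lemma reduces to showing that this last inequality, once $R$ is written in terms of $T$, is identical to the stated one.

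First I would set $u := T(x)-T(y)$ and $v := x - y$ to declutter notation, and compute
\[
R(x)-R(y) = \tfrac{1}{\theta}\, u \;-\; \tfrac{1-\theta}{\theta}\, v.
\]
Expanding $\|R(x)-R(y)\|^2$ gives $\tfrac{1}{\theta^2}\|u\|^2 - \tfrac{2(1-\theta)}{\theta^2}\langle u,v\rangle + \tfrac{(1-\theta)^2}{\theta^2}\|v\|^2$. Requiring this to be $\le \|v\|^2$ and multiplying through by $\theta^2$ turns the condition into
\[
\|u\|^2 - 2(1-\theta)\langle u, v\rangle \;\le\; \bigl(\theta^2 - (1-\theta)^2\bigr)\|v\|^2 = (2\theta - 1)\|v\|^2,
\]
after using the difference-of-squares identity $\theta^2 - (1-\theta)^2 = 2\theta - 1$. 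Rearranging recovers exactly $\|T(x)-T(y)\|^2 + (1-2\theta)\|x-y\|^2 \le 2(1-\theta)\langle T(x)-T(y), x-y\rangle$.

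For the converse direction I would observe that every step above is reversible: the displayed inequality and the nonexpansiveness of $R$ are equivalent statements, so assuming the displayed inequality yields $\|R(x)-R(y)\| \le \|x-y\|$, and hence $T = \theta R + (1-\theta)I$ realizes $T$ as a $\theta$-averaged map. There is no real obstacle here beyond bookkeeping; the only thing to be careful about is that $\theta \in (0,1)$ (not $0$ or $1$), so that $R$ is a well-defined mapping and the factors of $1/\theta$ and $1/\theta^2$ cause no issues.
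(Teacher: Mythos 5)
Your argument is correct: writing $R=\tfrac{1}{\theta}\bigl(T-(1-\theta)I\bigr)$, expanding $\|R(x)-R(y)\|^2\le\|x-y\|^2$, and multiplying by $\theta^2>0$ is exactly the standard equivalence, and every step reverses since $\theta\in(0,1)$ (note also that the converse automatically gives nonexpansiveness of $T$, as required by the paper's definition, since $T$ is then a convex combination of nonexpansive maps). The paper itself offers no proof---it simply cites Proposition~4.35 of \cite{BauschkeCombettes2017_convex}---and your computation is essentially the argument given there, so nothing further is needed.
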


% \cite[Theorem~3(b)]{ogura2002},
% \cite[Proposition~2.4]{combettes2015},
% \cite[Proposition~4.44]{BauschkeCombettes2017_convex}
\begin{lemma}[\cite{ogura2002,combettes2015}]
\label{lem:avg-comp}
Assume $T_1:\reals^d\rightarrow\reals^d$
and
$T_2:\reals^d\rightarrow\reals^d$
are
$\theta_1$ and $\theta_2$-averaged, respectively.
Then $T_1 T_2$ is 
$\frac{\theta_1+\theta_2- 2\theta_1\theta_2}{1-\theta_1\theta_2}$-averaged.
\end{lemma}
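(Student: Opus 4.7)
The plan is to derive the averagedness of $T_1T_2$ directly from the quadratic inequality characterization furnished by the preceding Lemma (Proposition~4.35 of Bauschke--Combettes). To that end, introduce the shorthand
\[
a = T_1(T_2(x)) - T_1(T_2(y)),\qquad b = T_2(x)-T_2(y),\qquad c = x-y,
\]
and apply the characterization to $T_1$ at the pair $(T_2(x),T_2(y))$ and to $T_2$ at $(x,y)$. This produces the two inequalities
\begin{align*}
\text{(I):}\quad & \|a\|^2 + (1-2\theta_1)\|b\|^2 \le 2(1-\theta_1)\langle a,b\rangle,\\
\text{(II):}\quad & \|b\|^2 + (1-2\theta_2)\|c\|^2 \le 2(1-\theta_2)\langle b,c\rangle.
\end{align*}
Setting $\theta = (\theta_1+\theta_2-2\theta_1\theta_2)/(1-\theta_1\theta_2)$, the target is the averagedness inequality
\[
\|a\|^2 + (1-2\theta)\|c\|^2 \le 2(1-\theta)\langle a,c\rangle,
\]
from which the claim follows by invoking the preceding Lemma in the reverse direction.

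The plan is to form a nonnegative combination $\mu\cdot(\mathrm{I}) + \nu\cdot(\mathrm{II})$ and then convert the mixed inner product $\langle a,b\rangle$ to the desired $\langle a,c\rangle$ using the identity
\[
\langle a,b\rangle = \langle a,c\rangle + \langle a,b-c\rangle,
\]
with the cross term controlled by the weighted Cauchy--Schwarz/Young bound $2\langle a,b-c\rangle \le \tfrac{1}{\gamma}\|a\|^2 + \gamma\|b-c\|^2$ and the expansion $\|b-c\|^2 = \|b\|^2 - 2\langle b,c\rangle + \|c\|^2$. The parameters $\mu,\nu,\gamma$ will be chosen so that (a) the coefficients of $\|a\|^2$ and $\|c\|^2$ and of $\langle a,c\rangle$ match those in the target inequality, and (b) the residual contributions in $\|b\|^2$ and $\langle b,c\rangle$ cancel identically using (II). A natural first guess, motivated by the useful identity
\[
1-\theta = \frac{(1-\theta_1)(1-\theta_2)}{1-\theta_1\theta_2},
\]
is to take $\mu=1$ and $\nu$ proportional to $\theta_1\theta_2/(1-\theta_1\theta_2)$, with $\gamma$ chosen to make the $\|b\|^2$ terms cancel. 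Solving the resulting small linear system in closed form should recover exactly the claimed value of $\theta$.

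The main obstacle will be the algebraic bookkeeping: verifying that the four matching conditions (coefficients of $\|a\|^2$, $\|c\|^2$, $\langle a,c\rangle$, and annihilation of the $\|b\|^2,\langle b,c\rangle$ block) are simultaneously satisfiable by a single nonnegative triple $(\mu,\nu,\gamma)$, and that the resulting $\theta$ simplifies to $(\theta_1+\theta_2-2\theta_1\theta_2)/(1-\theta_1\theta_2)$. Consistency of these conditions for all $\theta_1,\theta_2\in(0,1)$ will also need to be confirmed, which is what guarantees that $\theta<1$ and hence that $T_1T_2$ is genuinely averaged (not merely nonexpansive). Once these weights are identified, the derivation consists of a single chain of inequalities combining (I), (II), and Young's inequality, with no further analytical input required.
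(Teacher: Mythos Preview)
The paper does not prove this lemma; it is simply quoted with citations to \cite{ogura2002,combettes2015}. So there is no ``paper's proof'' to compare against, and I will just assess your plan on its own merits.

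Your general strategy --- combine the two averagedness inequalities with a Young/Cauchy--Schwarz step and match coefficients --- is exactly the idea behind the standard proof in Combettes--Yamada. However, the \emph{specific} Young inequality you intend to use, namely
\[
2\langle a,\,b-c\rangle\le \tfrac{1}{\gamma}\|a\|^2+\gamma\|b-c\|^2,
\]
does not allow the coefficient matching you are hoping for. If you write $Q_0=\mu Q_1+\nu Q_2+\lambda Q_Y$ (everything as quadratic forms in $(a,b,c)$) and read off the six coefficients of $\|a\|^2$, $\|b\|^2$, $\|c\|^2$, $\langle a,b\rangle$, $\langle a,c\rangle$, $\langle b,c\rangle$, the conditions on $\langle a,b\rangle$ and $\langle a,c\rangle$ force $\lambda=\mu(1-\theta_1)=1-\theta$, hence $\mu=\dfrac{1-\theta_2}{1-\theta_1\theta_2}<1$, and then the $\|a\|^2$ condition $\mu-\lambda/\gamma=1$ forces $\gamma<0$. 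So exact matching is impossible with this rank-one term, and your ``small linear system'' has no admissible solution.

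The fix is to apply Young to the pair $(a-b,\,b-c)$ rather than $(a,\,b-c)$. Equivalently, pass to the (equivalent) characterization
\[
\|Tx-Ty\|^2+\tfrac{1-\theta}{\theta}\,\|(I-T)x-(I-T)y\|^2\le\|x-y\|^2,
\]
which is just Lemma~\ref{lem:averagedness} rewritten. With $\beta_i=(1-\theta_i)/\theta_i$ the two hypotheses become
\[
\|a\|^2+\beta_1\|a-b\|^2\le\|b\|^2,\qquad \|b\|^2+\beta_2\|b-c\|^2\le\|c\|^2.
\]
Adding these telescopes the $\|b\|^2$ terms, and since $a-c=(a-b)+(b-c)$ one has, for any $s>0$,
\[
\|a-c\|^2\le(1+s)\|a-b\|^2+(1+1/s)\|b-c\|^2.
\]
Choosing $s$ so that $\beta_1/(1+s)=\beta_2/(1+1/s)=\kappa$ yields $\kappa=\beta_1\beta_2/(\beta_1+\beta_2)$ and hence
\[
\|a\|^2+\kappa\|a-c\|^2\le\|c\|^2,
\]
which is the target inequality with $\tfrac{1-\theta}{\theta}=\kappa$, i.e.\ $\theta=\dfrac{\theta_1+\theta_2-2\theta_1\theta_2}{1-\theta_1\theta_2}$. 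This is the argument in \cite{combettes2015}; your plan becomes correct once you swap the Young pairing.
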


\begin{lemma}
\label{lem:neg-avg}
Let $T:\reals^d\rightarrow\reals^d$.
$-T$ is $\theta$-averaged if and only if
$T\circ(-I)$ is $\theta$-averaged.
\end{lemma}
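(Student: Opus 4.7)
The plan is to apply the characterization of $\theta$-averagedness in Lemma~\ref{lem:averagedness} to both sides of the equivalence and observe that the two resulting inequalities are literally the same after a change of variables.

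First I would write down what ``$-T$ is $\theta$-averaged'' means via Lemma~\ref{lem:averagedness}. Since $\|(-T)(x)-(-T)(y)\|^2 = \|T(x)-T(y)\|^2$ and $\langle -T(x)+T(y),\, x-y\rangle = -\langle T(x)-T(y),\, x-y\rangle$, the condition becomes
\[
\|T(x)-T(y)\|^2 + (1-2\theta)\|x-y\|^2 \le -2(1-\theta)\langle T(x)-T(y),\, x-y\rangle
\]
for all $x,y\in\reals^d$.

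Next I would write down what ``$T\circ(-I)$ is $\theta$-averaged'' means: setting $S(x)=T(-x)$, Lemma~\ref{lem:averagedness} gives
\[
\|T(-x)-T(-y)\|^2 + (1-2\theta)\|x-y\|^2 \le 2(1-\theta)\langle T(-x)-T(-y),\, x-y\rangle
\]
for all $x,y\in\reals^d$. After the substitution $u=-x$, $v=-y$ (a bijection of $\reals^d\times\reals^d$), this becomes
\[
\|T(u)-T(v)\|^2 + (1-2\theta)\|u-v\|^2 \le -2(1-\theta)\langle T(u)-T(v),\, u-v\rangle,
\]
where the sign on the right-hand side comes from $x-y = -(u-v)$ while $\|x-y\|=\|u-v\|$.

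Finally I would observe that the two displayed inequalities are identical (after the harmless relabeling of dummy variables), so one holds for all pairs if and only if the other does, which proves the equivalence. There is no serious obstacle here; the only thing to watch is getting the sign flip in the inner product right under the change of variables, and noting that squared norms are insensitive to both operations.
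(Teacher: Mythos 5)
Your proof is correct. It does, however, take a different route than the paper: the paper argues directly from the decomposition definition of averagedness, noting that $T\circ(-I)=\theta R+(1-\theta)I$ with $R$ nonexpansive holds if and only if $-T=\theta\,(-R)\circ(-I)+(1-\theta)I$, and that $-R\circ(-I)$ is nonexpansive exactly when $R$ is; this is a one-line structural identity obtained by composing with $-I$ and negating. You instead invoke the inner-product characterization of Lemma~\ref{lem:averagedness} for both $-T$ and $T\circ(-I)$ and check that, after the substitution $u=-x$, $v=-y$, the two inequalities coincide (your sign bookkeeping is right: the squared norms are unchanged and the inner product flips sign in both cases, landing on the same inequality). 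Your approach buys a purely mechanical verification in which nonexpansiveness never has to be tracked separately, since the inequality characterization subsumes it; the paper's approach buys brevity and makes transparent the structural symmetry between $-T$ and $T\circ(-I)$, which is the ``negatively averaged'' viewpoint of Giselsson that the main Theorem~\ref{thm:contraction} then exploits. Either argument is acceptable.
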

\begin{proof}
The lemma follows from the fact that 
\[
T\circ (-I)=\theta R+(1-\theta)I
\quad\Leftrightarrow\quad
-T =\theta (-R)\circ (-I)+(1-\theta)I
\]
for some nonexpansive $R$ and that nonexpansiveness of $R$ and implies nonexpansivenes of  $-R\circ (-I)$.
\end{proof}

\begin{lemma}[\cite{taylor2018}]
\label{lem:grad-cont}
Assume $f$ is $\mu$-strongly convex and $\nabla f$ is $L$-Lipschitz. Then for any $x,y\in \reals^d$,
we have
\[
\|(I-\alpha \nabla f)(x)-(I-\alpha \nabla f)(y)\|\le 
\max\{|1-\alpha\mu|,|1-\alpha L|\}
\|x-y\|.
\]
\end{lemma}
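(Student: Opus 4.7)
The plan is to normalize by $\|x-y\|^2$ and reduce the Lipschitz bound to a two-variable optimization problem. Set $v := x-y$ and $u := \nabla f(x) - \nabla f(y)$, so that
\[
\|(I-\alpha\nabla f)(x)-(I-\alpha\nabla f)(y)\|^2 = \|v\|^2 - 2\alpha\langle u,v\rangle + \alpha^2\|u\|^2.
\]
Assuming $v\neq 0$ (the case $v=0$ is trivial), introduce $s := \langle u,v\rangle/\|v\|^2$ and $\tau := \|u\|^2/\|v\|^2$. It then suffices to establish
\[
1 - 2\alpha s + \alpha^2 \tau \;\le\; \max\{(1-\alpha\mu)^2,(1-\alpha L)^2\}
\]
under the structural constraints on $(s,\tau)$ coming from $\mu$-strong convexity and $L$-Lipschitz gradient.

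The key structural constraint is the sharp inequality of Nesterov (Theorem~2.1.12 of his convex optimization text): $(\mu+L)\langle u,v\rangle \ge \mu L\|v\|^2 + \|u\|^2$, i.e.\ $\tau \le (\mu+L)s - \mu L$. I would derive this by applying the Baillon--Haddad theorem to the convex, $(L-\mu)$-smooth function $g := f - (\mu/2)\|\cdot\|^2$, using the resulting $1/(L-\mu)$-cocoercivity of $\nabla g$ and a short rearrangement that substitutes $\nabla g = \nabla f - \mu I$. Together with the elementary bounds $s \ge \mu$ (strong convexity) and $s \le \tau^{1/2} \le L$ (Cauchy--Schwarz plus $L$-Lipschitz), this fixes the admissible region for $(s,\tau)$.

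Next I would execute the optimization in two stages. For fixed $s$, the coefficient $\alpha^2$ of $\tau$ in the objective is positive, so the maximum over admissible $\tau$ is attained at the Nesterov upper bound $\tau = (\mu+L)s - \mu L$. Substituting yields the affine-in-$s$ expression
\[
1 - \alpha^2 \mu L + \alpha\bigl(\alpha(\mu+L)-2\bigr)s,
\]
whose slope in $s$ changes sign at $\alpha = 2/(\mu+L)$. Maximizing over $s\in[\mu,L]$ then selects the endpoint $s=\mu$ for $\alpha < 2/(\mu+L)$ and $s=L$ for $\alpha > 2/(\mu+L)$. A short direct computation shows that these endpoint values are exactly $(1-\alpha\mu)^2$ and $(1-\alpha L)^2$, giving the claimed bound after taking square roots.

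The main obstacle is the clean derivation of Nesterov's sharp inequality; everything afterwards is elementary one-variable optimization. One should also verify that the Lipschitz cap $\tau \le L^2$ never binds before the Nesterov bound on the relevant interval, but this is immediate from $(\mu+L)s - \mu L \le L^2$ for all $s \in [\mu,L]$, with equality at $s=L$. A reassuring sanity check is that the bound is tight: it is achieved by a scalar quadratic $f(x) = (a/2)x^2$ with $a\in\{\mu,L\}$, which corresponds exactly to the two extremal $(s,\tau)$ configurations identified by the maximization.
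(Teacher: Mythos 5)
Your proposal is correct, and it is worth noting that the paper itself does not prove this lemma: it invokes it as Theorem~2.1 of Taylor et al.\ (2018), remarking that weaker forms appear in Polyak (1987) and Ryu--Boyd (2016). Your argument therefore supplies a self-contained derivation of the cited result, and it follows what is essentially the standard route to the tight constant: with $v=x-y$, $u=\nabla f(x)-\nabla f(y)$, $s=\langle u,v\rangle/\|v\|^2$, $\tau=\|u\|^2/\|v\|^2$, expand $\|v-\alpha u\|^2=(1-2\alpha s+\alpha^2\tau)\|v\|^2$, invoke the interpolation inequality $(\mu+L)\langle u,v\rangle\ge \mu L\|v\|^2+\|u\|^2$ together with $\mu\le s\le L$, and maximize the resulting affine function of $s$ over $[\mu,L]$. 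Your endpoint evaluations $(1-\alpha\mu)^2$ and $(1-\alpha L)^2$, the slope sign change at $\alpha=2/(\mu+L)$, its consistency with which of the two squares is larger, and the observation that the cap $\tau\le L^2$ never binds before the Nesterov bound are all correct; the containment argument (you only need the admissible $(s,\tau)$ set to lie inside the region you optimize over) is sound, and the scalar-quadratic tightness check is a nice touch. The only loose ends are degenerate or routine: (i) your Baillon--Haddad derivation of the interpolation inequality applies cocoercivity with constant $1/(L-\mu)$ to $g=f-(\mu/2)\|\cdot\|^2$, which breaks when $\mu=L$; in that case $\nabla g$ is constant, so $u=\mu v$ and the bound holds with equality, or one can cite Nesterov's Theorem~2.1.12 directly, which covers $\mu\le L$; (ii) before applying Baillon--Haddad you should record that $\nabla g$ is indeed $(L-\mu)$-Lipschitz, a one-line consequence of the $1/L$-cocoercivity of $\nabla f$. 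These are footnote-level fixes; the proof stands and matches the sharp statement the paper imports from the literature.
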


\begin{lemma}[Proposition~5.4 of \cite{giselsson2017}]
\label{lem:giselsson}
Assume $f$ is $\mu$-strongly convex, closed, and proper.
Then
\[
-(2\Prox_{\alpha f}-I)
\]
is $\frac{1}{1+\alpha\mu}$-averaged.
\end{lemma}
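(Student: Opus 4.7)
The plan is to go straight to the averagedness definition from the preamble: I must produce a nonexpansive operator $R:\reals^d\to\reals^d$ such that $-(2\Prox_{\alpha f}-I) = \theta R + (1-\theta)I$ with $\theta=1/(1+\alpha\mu)$. Solving this identity algebraically (using $(1-\theta)/\theta = \alpha\mu$ and $1/\theta = 1+\alpha\mu$) forces the unique candidate $R = I - 2(1+\alpha\mu)\Prox_{\alpha f}$, so the entire lemma reduces to verifying that this specific $R$ is nonexpansive.

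The key ingredient will be a strengthened estimate for the proximal operator under strong convexity. Write $P=\Prox_{\alpha f}$ and put $u = P(x)$, $v = P(y)$. The optimality condition defining the prox gives $x-u \in \alpha \partial f(u)$ and $y-v \in \alpha \partial f(v)$. Since $f$ is $\mu$-strongly convex, $\partial f$ is $\mu$-strongly monotone (equivalently, decompose $f = g + (\mu/2)\|\cdot\|^2$ with $g$ convex, so $\partial f = \partial g + \mu I$). Subtracting the two inclusions and pairing with $u-v$ yields $\langle (x-u)-(y-v),\,u-v\rangle \ge \alpha\mu\|u-v\|^2$, which rearranges to the crucial estimate $\langle x-y,\,P(x)-P(y)\rangle \ge (1+\alpha\mu)\|P(x)-P(y)\|^2$.

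With this estimate in hand, I expand $\|R(x)-R(y)\|^2 = \|x-y\|^2 - 4(1+\alpha\mu)\langle x-y,\,P(x)-P(y)\rangle + 4(1+\alpha\mu)^2\|P(x)-P(y)\|^2$ and use the estimate to upper-bound the cross term by $-4(1+\alpha\mu)^2\|P(x)-P(y)\|^2$. The two $(1+\alpha\mu)^2$-contributions cancel exactly, leaving $\|R(x)-R(y)\|^2 \le \|x-y\|^2$, so $R$ is nonexpansive and the averagedness representation is established.

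There is no significant obstacle here; the lemma is essentially a repackaging of the familiar fact that $\Prox_{\alpha f}$ is a strict contraction with factor $1/(1+\alpha\mu)$ when $f$ is $\mu$-strongly convex, translated into the averaged-operator language that Theorem~\ref{thm:contraction} consumes. The only bit of care needed is keeping the bookkeeping of $\theta$ and $1-\theta$ straight when back-solving for the candidate $R$; every subsequent step is a single application of strong monotonicity followed by direct algebra.
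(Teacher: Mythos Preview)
Your argument is correct. The paper itself does not supply a proof of this lemma; it simply attributes the result to Giselsson~\cite{giselsson2017} and uses it as a black box in the proof of Theorem~\ref{thm:contraction}. Your self-contained derivation---back-solving for the unique candidate $R=I-2(1+\alpha\mu)\Prox_{\alpha f}$ and then verifying its nonexpansiveness via the strong-monotonicity estimate $\langle x-y,\,P(x)-P(y)\rangle \ge (1+\alpha\mu)\|P(x)-P(y)\|^2$---is the standard direct route and is exactly what the cited reference does (there the operator $I-2P$ is the ``negatively averaged'' reflected resolvent, and the proof likewise reduces to this cocoercivity inequality). One small quibble with your closing remark: the lemma is \emph{not} merely a repackaging of the fact that $\Prox_{\alpha f}$ is a $1/(1+\alpha\mu)$-contraction; the $\theta$-averagedness of $I-2P$ is \emph{equivalent} to the sharper cocoercivity inequality you proved, which is strictly stronger than the bare Lipschitz bound $\|P(x)-P(y)\|\le \tfrac{1}{1+\alpha\mu}\|x-y\|$. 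That distinction does not affect the validity of your proof, only the accuracy of the commentary.
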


\paragraph{References.}
The notion of proximal operator and  its well-definedness were first presented in \cite{moreau1965}.
The notion of averaged mappings were first introduced in \cite{bailion1978}.
%Specifically, Lemma~\ref{lem:avg-comp} is presented as Theorem 3(b) in  \cite{ogura2002}, Proposition 2.4 in  \cite{combettes2015}, and Proposition 4.44 in  \cite{BauschkeCombettes2017_convex}.
 The idea of Lemma~\ref{lem:neg-avg} relates to ``negatively averaged'' operators from  \cite{giselsson2017}.
Lemma~\ref{lem:grad-cont} is proved in a weaker form as Theorem 3 of \cite{polyak1987} and in Section 5.1 of \cite{ryu2016}.
Lemma~\ref{lem:grad-cont} as stated is proved as Theorem 2.1 in \cite{taylor2018}.

\section{Proofs of main results}
\subsection{Equivalence of PNP-DRS and PNP-ADMM}
We show the standard steps that establish equivalence of PNP-DRS and PNP-ADMM.
%We now that PNP-DRS is equivalent to PNP-ADMM.
Starting from PNP-DRS, we substitute $z^k=x^k+u^k$ to get
\begin{align*}
x^{k+1/2}&=
\Prox_{\alpha f}(x^k+u^k)\\
x^{k+1}&=H_\sigma(x^{k+1/2}-(u^k+x^k-x^{k+1/2}))\\
u^{k+1}&=u^k+x^{k}-x^{k+1/2}.
\end{align*}
We reorder the iterations to get the correct dependency
\begin{align*}
x^{k+1/2}&=
\Prox_{\alpha f}(x^k+u^k)
\\
u^{k+1}&=u^k+x^{k}-x^{k+1/2}\\
x^{k+1}&=H_\sigma(x^{k+1/2}-u^{k+1}).
\end{align*}
We label $\tilde{y}^{k+1}=x^{k+1/2}$ and $\tilde{x}^{k+1}=x^{k}$
\begin{align*}
\tilde{x}^{k+1}&=H_\sigma(\tilde{y}^{k}-u^{k})\\
\tilde{y}^{k+1}&=
\Prox_{\alpha f}(\tilde{x}^{k+1}+u^k)\\
u^{k+1}&=u^k+\tilde{x}^{k+1}-\tilde{y}^{k+1},
\end{align*}
and we get PNP-ADMM.

\subsection{Convergence analysis}
\begin{lemma}
\label{lem:Hlemma}
$H_\sigma:\reals^d\rightarrow\reals^d$ satisfies Assumption \eqref{assumption:H} if and only if
\[
\frac{1}{1+\varepsilon}H_\sigma
\]
is nonexpansive and $\frac{\varepsilon}{1+\varepsilon}$-averaged.
%for $\varepsilon\in[0,1)$.
\end{lemma}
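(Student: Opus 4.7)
The plan is to reduce the equivalence to the elementary definition of a $\theta$-averaged operator: $S$ is $\theta$-averaged iff $S=\theta R+(1-\theta)I$ for some nonexpansive $R$. With this definition in hand, Assumption~\eqref{assumption:H} becomes nothing more than a rescaling of the nonexpansiveness of the ``$R$-part'' of $\frac{1}{1+\varepsilon}H_\sigma$.

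For the direction $(\Leftarrow)$, suppose $\frac{1}{1+\varepsilon}H_\sigma$ is $\frac{\varepsilon}{1+\varepsilon}$-averaged. By definition there exists a nonexpansive $R:\reals^d\to\reals^d$ with
\[
\frac{1}{1+\varepsilon}H_\sigma \;=\; \frac{\varepsilon}{1+\varepsilon}R+\frac{1}{1+\varepsilon}I.
\]
Multiplying both sides by $1+\varepsilon$ gives $H_\sigma-I=\varepsilon R$, and nonexpansiveness of $R$ translates directly into Assumption~\eqref{assumption:H}.

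For the direction $(\Rightarrow)$, suppose Assumption~\eqref{assumption:H} holds. If $\varepsilon>0$, define $R:=(H_\sigma-I)/\varepsilon$; the assumption says exactly that $R$ is nonexpansive. Then
\[
\frac{1}{1+\varepsilon}H_\sigma \;=\; \frac{1}{1+\varepsilon}I+\frac{\varepsilon}{1+\varepsilon}R,
\]
which exhibits $\frac{1}{1+\varepsilon}H_\sigma$ in the canonical $\frac{\varepsilon}{1+\varepsilon}$-averaged form, and averagedness with parameter in $(0,1)$ implies nonexpansiveness. The degenerate case $\varepsilon=0$ forces $H_\sigma-I$ to be constant, so $H_\sigma=I+c$ and $\frac{1}{1+\varepsilon}H_\sigma=H_\sigma$ is obviously nonexpansive; this corner can either be excluded or handled by a one-line remark.

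There is no real obstacle here, only bookkeeping; the only mild subtlety is making sure the rescaling $(H_\sigma-I)/\varepsilon$ is legal (hence the $\varepsilon=0$ aside) and that one invokes the decomposition definition of ``averaged'' rather than the inequality characterization of Lemma~\ref{lem:averagedness}, which would force one to redo the same algebra via inner products. Using the decomposition form keeps the proof to essentially one line in each direction.
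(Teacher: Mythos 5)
Your proof is correct, and it takes a genuinely different route from the paper's. The paper argues through the inner-product characterization of averagedness (Lemma~\ref{lem:averagedness}): setting $G=\frac{1}{1+\varepsilon}H_\sigma$ and $\theta=\frac{\varepsilon}{1+\varepsilon}$, it expands $\|(H_\sigma-I)(x)-(H_\sigma-I)(y)\|^2-\varepsilon^2\|x-y\|^2$ and shows it equals $\frac{1}{(1-\theta)^2}$ times the quantity whose nonpositivity characterizes $\theta$-averagedness of $G$. You instead work directly from the decomposition definition: Assumption~\eqref{assumption:H} says precisely that $R:=(H_\sigma-I)/\varepsilon$ is nonexpansive, i.e., $H_\sigma=I+\varepsilon R$, and dividing by $1+\varepsilon$ gives the canonical form $\frac{1}{1+\varepsilon}H_\sigma=\frac{1}{1+\varepsilon}I+\frac{\varepsilon}{1+\varepsilon}R$, so both directions are immediate rescalings. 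Your version is shorter, needs no lemma beyond the definition, makes the structural content transparent (the assumption literally says $H_\sigma$ is the identity plus $\varepsilon$ times a nonexpansive map), and adapts verbatim to Lemma~\ref{lem:Hlemma2}, since the same $R$ yields $\frac{1}{1+2\varepsilon}(2H_\sigma-I)=\frac{1}{1+2\varepsilon}I+\frac{2\varepsilon}{1+2\varepsilon}R$. What the paper's route buys is mainly that the inequality form degrades more gracefully at the corner $\varepsilon=0$ (where ``$0$-averaged'' in the inequality sense means $G-I$ is constant, exactly what the assumption forces), whereas under the decomposition definition $0$-averaged would force $G=I$; your explicit $\varepsilon=0$ aside, or simply restricting to $\varepsilon>0$, is the right way to close that corner, and the paper itself glosses over it since its definition of averagedness requires $\theta\in(0,1)$.
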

\begin{proof}
Define $\theta = \frac{\varepsilon}{1+\varepsilon}$, which means $\varepsilon=\frac{\theta}{1-\theta}$. 
Clearly, $\theta\in [0,1)$.
Define $G=\frac{1}{1+\varepsilon}H_\sigma$, which means
$H_\sigma=\frac{1}{1+\theta}G$.
Then
\begin{align*}
&\underbrace{\|(H_\sigma-I)(x)-(H_\sigma-I)(y)\|^2-\frac{\theta^2}{(1-\theta)^2} \|x-y\|^2}_{\text{(TERM A)}}\\
&\quad=\frac{1}{(1-\theta)^2}\|G(x)-G(y)\|^2+\left(1-\frac{\theta^2}{(1-\theta)^2}\right)\|x-y\|^2-\frac{2}{1-\theta}\langle G(x)-G(y),x-y\rangle\\
&\quad=
\frac{1}{(1-\theta)^2}
\bigg(
\underbrace{\|G(x)-G(y)\| ^2
+(1-2\theta)\|x-y\| ^2
-2(1-\theta)
\langle
G(x)-G(y),x-y\rangle}_{\text{(TERM B)}}
\bigg).
\end{align*}
Remember that Assumption \eqref{assumption:H} corresponds to
$\text{(TERM A)}\le 0$ for all $x,y\in \reals^d$.
This is equivalent to $\text{(TERM B)}\le 0$ for all $x,y\in \reals^d$, which corresponds to $G$ being $\theta$-averaged by Lemma~\ref{lem:averagedness}.
\end{proof}

\begin{lemma}
\label{lem:Hlemma2}
$H_\sigma:\reals^d\rightarrow\reals^d$ satisfies Assumption \eqref{assumption:H} if and only if
\[
\frac{1}{1+2\varepsilon}(2H_\sigma-I)
\]
is nonexpansive and $\frac{2\varepsilon}{1+2\varepsilon}$-averaged.
\end{lemma}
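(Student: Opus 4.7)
\noindent
My plan is to mimic the proof of the preceding Lemma~\ref{lem:Hlemma} essentially verbatim, with the substitution rescaled to account for the operator $2H_\sigma - I$ instead of $H_\sigma$. Concretely, I would introduce the parameter $\theta = \tfrac{2\varepsilon}{1+2\varepsilon}$, which lies in $[0,1)$ and satisfies $\varepsilon = \tfrac{\theta}{2(1-\theta)}$, and define $G = \tfrac{1}{1+2\varepsilon}(2H_\sigma - I)$. The goal is to show that Assumption~\eqref{assumption:H} for $H_\sigma$ with constant $\varepsilon$ is equivalent to the averagedness characterization of Lemma~\ref{lem:averagedness} for $G$ with parameter $\theta$.

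The first step is to express $H_\sigma - I$ in terms of $G$. From the definition of $G$ we get $2H_\sigma - I = (1+2\varepsilon)G$, so
\[
H_\sigma - I = \tfrac{1}{2}\bigl((1+2\varepsilon) G - I\bigr).
\]
Applied to $x$ and $y$ and subtracted, this yields
\[
(H_\sigma - I)(x) - (H_\sigma - I)(y) = \tfrac{1+2\varepsilon}{2}\bigl(G(x)-G(y)\bigr) - \tfrac{1}{2}(x-y).
\]
Expanding the squared norm of this expression, Assumption~\eqref{assumption:H} becomes
\[
\tfrac{(1+2\varepsilon)^2}{4}\|G(x)-G(y)\|^2 + \tfrac{1}{4}\|x-y\|^2 - \tfrac{1+2\varepsilon}{2}\langle G(x)-G(y), x-y\rangle \le \varepsilon^2 \|x-y\|^2.
\]

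The second step is purely algebraic: multiply through by $4/(1+2\varepsilon)^2$ and simplify the coefficient on $\|x-y\|^2$ using the identity
\[
\tfrac{1 - 4\varepsilon^2}{(1+2\varepsilon)^2} = \tfrac{1-2\varepsilon}{1+2\varepsilon} = 1 - 2\theta,
\]
and the coefficient on the inner product using $\tfrac{2}{1+2\varepsilon} = 2(1-\theta)$. The inequality reduces precisely to
\[
\|G(x)-G(y)\|^2 + (1-2\theta)\|x-y\|^2 \le 2(1-\theta)\langle G(x)-G(y), x-y\rangle,
\]
which by Lemma~\ref{lem:averagedness} is exactly the characterization of $G$ being $\theta$-averaged, hence in particular nonexpansive. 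Since every step in the chain is a reversible equivalence, both directions of the ``if and only if'' are obtained at once.

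I do not anticipate any real obstacle here; the only mild annoyance is getting the constants right in the algebraic simplification and verifying that $\theta = \tfrac{2\varepsilon}{1+2\varepsilon}$ and $\varepsilon$ are in bijective correspondence on the relevant range. Structurally, the proof is just a rescaled copy of Lemma~\ref{lem:Hlemma}.
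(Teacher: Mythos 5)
Your proposal is correct and follows essentially the same route as the paper: the same parametrization $\theta = \tfrac{2\varepsilon}{1+2\varepsilon}$, the same operator $G = \tfrac{1}{1+2\varepsilon}(2H_\sigma - I)$, and the same reduction of Assumption~\eqref{assumption:H} to the averagedness inequality of Lemma~\ref{lem:averagedness} by expanding the squared norm and rescaling by a positive constant, so all steps are reversible and both directions follow. The algebraic identities you cite check out, so there is no gap.
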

\begin{proof}
Define $\theta = \frac{2\varepsilon}{1+2\varepsilon}$, which means $\varepsilon=\frac{\theta}{2(1-\theta)}$.
Clearly, $\theta\in [0,1)$.
Define $G=\frac{1}{1+2\varepsilon}(2H_\sigma-I)$, which means $H_\sigma=\frac{1}{2(1-\theta)}G+\frac{1}{2}I$. 
Then
\begin{align*}
&\underbrace{\|(H_\sigma-I)(x)-(H_\sigma-I)(y)\|^2-\frac{\theta^2}{4(1-\theta)^2} \|x-y\|^2}_{\text{(TERM A)}}\\
&\quad=\frac{1}{4(1-\theta)^2}\|G(x)-G(y)\|^2
+
\left(
\frac{1}{4}
-\frac{\theta^2}{4(1-\theta)^2}
\right)
\|x-y\|^2-\frac{1}{2(1-\theta)}\langle G(x)-G(y),x-y\rangle\\
&\quad=
\frac{1}{4(1-\theta)^2}
\bigg(
\underbrace{
\|G(x)-G(y)\| ^2
+(1-2\theta)\|x-y\| ^2
-2(1-\theta)
\langle
G(x)-G(y),x-y\rangle}_{\text{(TERM B)}}
\bigg).
\end{align*}
Remember that Assumption \eqref{assumption:H} corresponds to
$\text{(TERM A)}\le 0$ for all $x,y\in \reals^d$.
This is equivalent to $\text{(TERM B)}\le 0$ for all $x,y\in \reals^d$, which corresponds to $G$ being $\theta$-averaged by Lemma~\ref{lem:averagedness}.
\end{proof}

\begin{proof}[\textbf{Proof of Theorem~\ref{thm:fbs-contraction}}]
In general, if operators $T_1$ and $T_2$ are $L_1$ and $L_2$-Lipschitz, then the composition $T_1T_2$ is $(L_1L_2)$-Lipschitz.
By Lemma~\ref{lem:grad-cont}, $I-\alpha \nabla f$ is $\max\{|1-\alpha\mu|,|1-\alpha L|\}$-Lipschitz.
By Lemma~\ref{lem:Hlemma}, $H_\sigma$ is $(1+\varepsilon)$-Lipschitz.
The first part of the theorem following from composing the Lipschitz constants.
The restrictions on $\alpha$ and $\varepsilon$ follow from basic algebra.
\end{proof}

\begin{proof}[\textbf{Proof of Theorem~\ref{thm:contraction}}]
By Lemma~\ref{lem:giselsson}, 
\[
-(2\Prox_{\alpha f}-I)
\]is $\frac{1}{1+\alpha\mu}$-averaged, and this implies 
\[
(2\Prox_{\alpha f}-I)\circ (-I)
\]
is also $\frac{1}{1+\alpha\mu}$-averaged, by Lemma~\ref{lem:neg-avg}.
By Lemma~\ref{lem:Hlemma2}, 
\[
\frac{1}{1+2\varepsilon}(2H_\sigma-I)
\]is $\frac{2\varepsilon}{1+2\varepsilon}$-averaged.
Therefore,
\[
\frac{1}{1+2\varepsilon}(2H_\sigma-I)(2\Prox_{\alpha f}-I)\circ (-I)
\]
is $\frac{1+2\varepsilon\alpha\mu}{1+\alpha\mu+2\varepsilon\alpha\mu}$\nobreakdash-averaged by Lemma~\ref{lem:avg-comp}, and this implies
\[
-\frac{1}{1+2\varepsilon}(2H_\sigma-I)(2\Prox_{\alpha f}-I)
\]
is also $\frac{1+2\varepsilon\alpha\mu}{1+\alpha\mu+2\varepsilon\alpha\mu}$\nobreakdash-averaged, by Lemma~\ref{lem:neg-avg}.
% \[
% \frac{1+2\varepsilon\alpha\mu}{1+\alpha\mu+2\varepsilon\alpha\mu}
% \]
%$(1+2\varepsilon\alpha\mu)/(1+\alpha\mu+2\varepsilon\alpha\mu)$\nobreakdash-averaged.

Using the definition of averagedness, we can write
\[
(2H_\sigma-I)(2\Prox_{\alpha f}-I)
=-(1+2\varepsilon)
\left(
\frac{\alpha\mu}{1+\alpha\mu+2\varepsilon\alpha\mu}
I
+
\frac{1+2\varepsilon\alpha\mu}{1+\alpha\mu+2\varepsilon\alpha\mu}R
\right)
\]
where $R$ is a nonexpansive operator. Plugging this into the PNP-DRS operator, we get
\begin{align}
T&=\frac{1}{2}I-\frac{1}{2}(1+2\varepsilon)\left(
\frac{\alpha\mu}{1+\alpha\mu+2\varepsilon\alpha\mu}
I
+
\frac{1+2\varepsilon\alpha\mu}{1+\alpha\mu+2\varepsilon\alpha\mu}R
\right)\nonumber\\
&=
\underbrace{\frac{1}{2(1+\alpha\mu+2\varepsilon\alpha\mu)}}_{=A}I
-\underbrace{\frac{(1+2\varepsilon\alpha\mu)(1+2\varepsilon)}{2(1+\alpha\mu+2\varepsilon\alpha\mu)}}_{=B}R,
\label{eq:drs_expression}
\end{align}
where define the coefficients $A$ and $B$ for simplicity. Clearly, $A>0$ and $B>0$.
Then we have
\begin{align*}
\|Tx-Ty\|^2&= A^2\|x-y\|^2+B^2\|R(x)-R(y)\|^2-2\langle A(x-y),B(R(x)-R(y))\rangle\\
&\le A^2\left(
1+\frac{1}{\delta}
\right)
\|x-y\|^2+B^2\left(1+\delta\right)\|R(x)-R(y)\|^2\\
&\le \left(A^2\left(
1+\frac{1}{\delta}
\right)
+B^2\left(1+\delta\right)\right)\|x-y\|^2
\end{align*}
for any $\delta>0$.
The first line follows from plugging in \eqref{eq:drs_expression}. 
The second line follows from applying Young's inequality to the inner product.
The third line follows from nonexpansiveness of $R$.

Finally, we optimize the bound. It is a matter of simple calculus to see
\[
\min_{\delta>0}\left\{A^2\left(1+\frac{1}{\delta}\right)+B^2\left(1+\delta\right)\right\}=(A+B)^2.
\]
%with minimizer $\delta=\frac{A}{B}$.
Plugging this in, we get
\begin{align*}
\|Tx-Ty\|^2&\le (A+B)^2\|x-y\|^2=
\left(\frac{1+\varepsilon+\varepsilon\alpha\mu+2 \varepsilon^2\alpha\mu}{1+\alpha\mu+2\varepsilon\alpha\mu}
\right)^2\|x-y\|^2,
\end{align*}
which is the first part of the theorem.

The restrictions on $\alpha$ and $\varepsilon$ follow from basic algebra.
\end{proof}

\begin{figure*}[h]
  \centering
  \includegraphics[width=\textwidth]{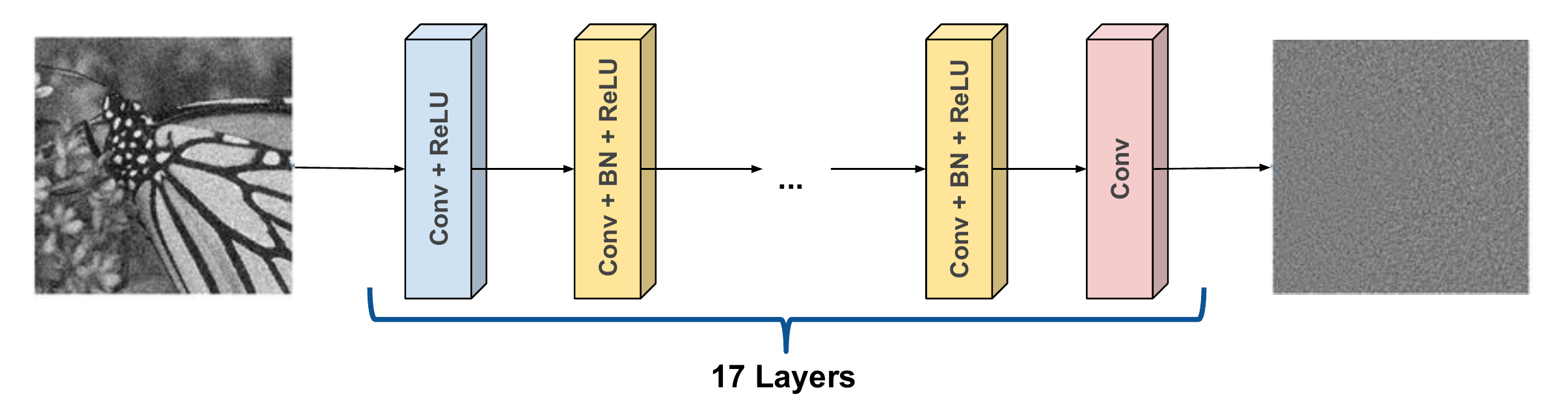}
  \caption{DnCNN Network Architecture}
%  \vspace{-2em}
    \label{model}
\end{figure*}

\begin{figure*}[h]
  \centering
  \includegraphics[width=\textwidth]{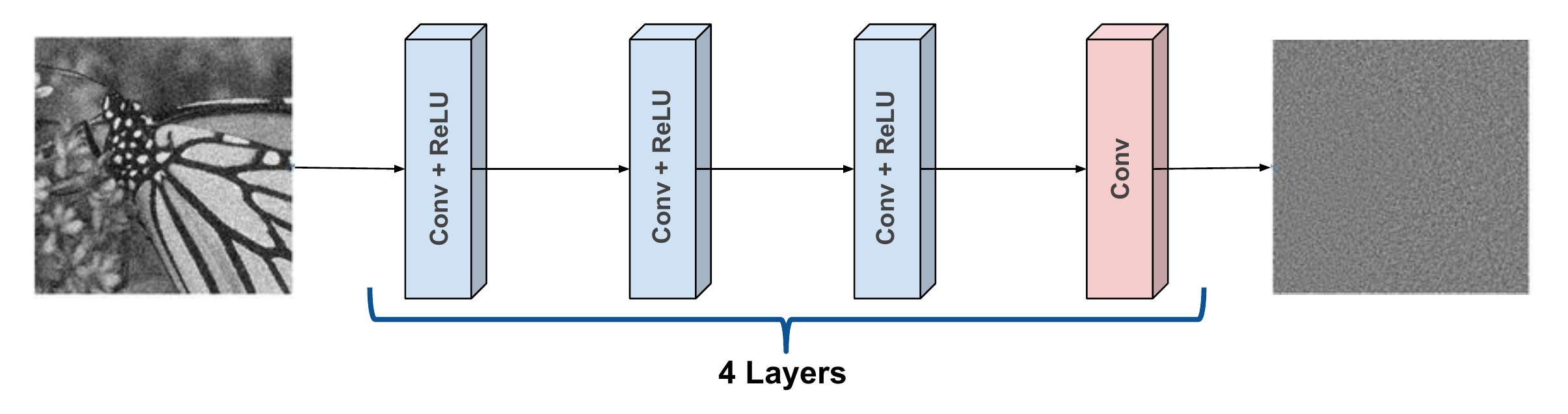}
  \caption{SimpleCNN Network Architecture}
%  \vspace{-2em}
    \label{model}
\end{figure*}

\end{document}